\algrenewcommand\algorithmicrequire{\textbf{Input:}}
\newcommand{\CommentState}[1]{\Statex\hspace{\algorithmicindent}{\color{blue}// #1}}
\newtheorem{assumption}{Assumption}
\newtheorem{lemma}{Lemma}
\newtheorem{theorem}{Theorem}
\newcommand{\alg}{LT-ADMM-CC\xspace}
\newcommand{\tgrad}{t_g\xspace}
\newcommand{\tcomm}{t_c\xspace}
\DeclareMathOperator*{\argmin}{arg\,min}
\newcommand{\N}{\mathbb{N}}
\newcommand{\R}{\mathbb{R}}
\newcommand{\norm}[1]{\left\lVert#1\right\rVert}
\title{\LARGE \bf
Jointly Computation- and Communication-Efficient Distributed Learning
}
\author{%
	Xiaoxing~Ren$^{1}$, Nicola~Bastianello$^{2\star}$, Karl~H.~Johansson$^{2}$, Thomas~Parisini$^{1,3,4}$
	\thanks{The work of X. R and T. P. was partially supported by European Union's Horizon 2020 research and innovation programme under grant agreement no. 739551 (KIOS CoE).}
\thanks{The work of N.B. and K.H.J. was partially supported by the European Union’s Horizon Research and Innovation Actions programme under grant agreement No. 101070162, and partially by Swedish Research Council Distinguished Professor Grant 2017-01078 Knut and Alice Wallenberg Foundation Wallenberg Scholar Grant.}
	\thanks{$^{1}$Department of Electrical and Electronic Engineering, Imperial College London, London, United Kingdom.}%
	\thanks{$^{2}$School of Electrical Engineering and Computer Science, and Digital Futures, KTH Royal Institute of Technology, Stockholm, Sweden.}
	\thanks{$^{3}$Department of Electronic Systems, Aalborg University, Denmark.}%
	\thanks{$^{4}$Department of Engineering and Architecture, University of Trieste, Trieste, Italy.}%
	\thanks{$^{\star}$Corresponding author. Email: {\tt\footnotesize nicolba@kth.se}.}%
}
\begin{document}

\maketitle

% don't print page numbers
\thispagestyle{empty}
\pagestyle{empty}
% print page numbers
% \thispagestyle{plain}
% \pagestyle{plain}

%%%%%%%%%%%%%%%%%%%%%%%%%%%%%%%%%%%%%%%%%%%%%%%%%%%%%%%%%%%%%%%%%%%%%%%%%%%%%%%%
% \begin{abstract}
% In this paper, we explore distributed learning over undirected networks by addressing the challenges incurred by expensive communications and high computational loads from processing massive datasets (\textit{e.g.} in neural network training). 
% We extend the distributed ADMM, by proposing a novel local training algorithm  \alg with multiple local updates between each compressed communication round.
% Furthermore, we let the agents employ stochastic gradients with variance reduction to alleviate the computational cost of full local gradient evaluations.
% We establish a linear convergence rate for strongly convex loss functions and demonstrate the promising performance of \alg through comparisons with state-of-the-art methods on a classification task.
% \end{abstract}

\begin{abstract}
We address distributed learning problems over undirected networks. Specifically, we focus on designing a novel ADMM-based algorithm that is jointly computation- and communication-efficient.
Our design guarantees computational efficiency by allowing agents to use stochastic gradients during local training.
Moreover, communication efficiency is achieved as follows: i) the agents perform multiple training epochs between communication rounds, and ii) compressed transmissions are used.
%
%Additionally, our technique integrates two feedback mechanisms that act on the sources of inexactness that are stochastic gradients and compressed communications.
%
We prove \textit{exact} linear convergence of the algorithm in the strongly convex setting.
We corroborate our theoretical results by numerical comparisons with state of the art techniques on a classification task.
\end{abstract}

%%%%%%%%%%%%%%%%%%%%%%%%%%%%%%%%%%%%%%%%%%%%%%%%%%%%%%%%%%%%%%%%%%%%%%%%%%%%%%%%
\section{Introduction}\label{sec:intro}
Smart devices equipped with computational and communications resources underwent, in recent years, a widespread adoption in many applications, including power grids, robotics, traffic and sensor networks \cite{molzahn_survey_2017,nedic_distributed_2018}.
These devices then become the components of multi-agent systems that can collect data and cooperatively achieve learning tasks. However, their resources (CPU and communications) might be limited. Thus, in this paper we focus on the design of a \textit{distributed learning algorithm that is jointly computation- and communication-efficient}.

Different techniques have been explored to design \textit{computation-efficient} algorithms. The main solution is the use of stochastic gradients, which allow the agents to update their models using only a subset of their local data \cite{koloskova2019decentralized,alghunaim_local_2023}.
However, stochastic gradients might cause inexact convergence, and \textit{variance reduction} was proposed to solve this issue, see \textit{e.g.}, \cite{li_variance_2022,jiang_distributed_2023}.

In terms of \textit{communication-efficiency}, the main methods adopted in machine learning are \textit{compression} and \textit{local training}.
Compression allows to reduce the size of communications exchanged by the agents, for example by transmitting only the most significant components of a model.
Different distributed algorithms were designed to use compressed communications (see, for instance 
\cite{yi2022communication,zhang2023innovation,huang2024cedas,liulinear}.
Of these, \cite{yi2022communication,zhang2023innovation} also implement an \textit{error feedback} mechanism, which ensures exact convergence even when using compression.

While compression reduces the size of communications, local training instead reduces their frequency.
The idea is to allow the agents to perform multiple steps of training between each round of communications.
This paradigm has been applied \textit{e.g.} in \cite{hien_nguyen_performance_2023,liu_decentralized_2023} to gradient tracking, in \cite{alghunaim_local_2023} to the distributed dual ascent method, and in our works \cite{ren2024distributed,ren2025communication} that focus on redesigning the distributed ADMM.

It is worth noting that the algorithms we reviewed integrate only one or two computation- or communication-efficient design strategies, thus resulting in inexact convergence when dealing with both stochastic gradients and compression.
Therefore, in this paper we devise an algorithm aiming at joint computational and communication efficiency. Our main contributions are:
\begin{itemize}
    \item We propose a novel algorithm, \alg (Local Training ADMM with Compressed Communication), based on ~\cite{ren2025communication}, which employs stochastic gradients with variance reduction for computation-efficiency, and compression and local training for communication-efficiency. Additionally, our design integrates error feedback.

    \item We analyze the convergence of \alg in a strongly convex setting, and prove its exact linear convergence to the optimal solution. This important result is enabled by double feedback loop of variance reduction and error feedback, which asymptotically reject the stochastic gradient and compression errors, respectively.
    
    \item We provide a numerical comparison of \alg with state-of-the-art alternatives in the context of a classification task, thus highlighting its exact convergence and showcasing its superior performance.
\end{itemize}

%%%%%%%%%%%%%%%%%%%%%%%%%%%%%%%%%%%%%%%%%%%%%%%%%%%%%%%%%%%%%%%%%%%%%%%%%%%%%%%%
\section{Algorithm Design and Analysis}

We start by formally introducing the problem, then present the proposed algorithm and analyze its convergence.

\subsection{Problem Formulation}
Consider a network $\mathcal{G} = (\mathcal{V},\mathcal{E})$ consisting of $N$ agents, each of which has access to a local dataset that defines the cost
\begin{equation}\label{eq:erm-cost}
    f_{i}({x}) = \frac{1}{m_i} \sum_{h=1}^{m_i} f_{i,h}(x) \, ,
\end{equation}
with $f_{i,h}: \mathbb{R}^{n} \rightarrow \mathbb{R}$ being the loss function associated to data point $h \in \{ 1, \ldots, m_i \}$.
The goal is for the network to solve the following consensus optimization problem
\begin{equation}\label{eq:optimization-problem}
    \min_{x_i \in\mathbb{R}^{n}, \ i \in \mathcal{V}} \ \ \frac{1}{N} \sum_{i=1}^{N}f_{i}({x}_i) \quad \text{s.t.} \ \ x_1 = x_2 = \cdots = x_N \, ,
\end{equation}
where the objective sums the local costs~\eqref{eq:erm-cost}, and the constraints enforce agreement on a shared trained model.
In the following, we denote the (unique) optimal solution as $\mathbf{X}^* = \mathbf{1}_N \otimes x^*$, where $\otimes$ denotes Kronecker product, and $x^* = \argmin_{x \in \R^n} \sum_{i = 1}^N f_i(x)$.

We characterize the problem via the following assumptions.

\begin{assumption}\label{as:local-costs}
The cost $f_i$ of each agent $i \in \mathcal{V}$ is $L$-smooth and $\mu$-strongly convex, with $0 < \mu \leq L < \infty$.%
\footnote{
We recall that a function $f : \R^n \to \R$ is $L$-smooth if it is differentiable and $\Vert \nabla f_{i}(x)-\nabla f_{i}(y)\Vert  \leq L \Vert x-y \Vert$, $\forall x, y \in \mathbb{R}^n$; moreover, $f$ is $\mu$-strongly convex if $\left\langle  \nabla f_{i}(x)-\nabla f_{i}(y), x-y\right\rangle  \geq \mu \Vert x-y \Vert^2$, $\forall x, y \in \mathbb{R}^n$.
}
\end{assumption}

\begin{assumption}\label{as:graph}
$\mathit{\mathcal{G}} = (\mathcal{V},\mathcal{E})$ is connected and undirected.
\end{assumption}

These assumptions are standard for distributed problems, with only strong convexity being somewhat restrictive.
However, strong convexity is instrumental to prove linear convergence; and we remark that it can be relaxed to allow for nonconvex problems extending the analysis of \cite{ren2025communication}.

\subsection{Algorithm design}
We recall that we focus on designing a distributed learning algorithm that is jointly computation- and communication-efficient.
To this end, we start from the distributed ADMM of \cite{bastianello_asynchronous_2021}, and in the following steps we integrate some suitable design modifications.
Therefore, our starting point is the algorithm characterized by the updates:
\begin{subequations}\label{eq:admm}
\begin{align}
    x_{i, k+1} &= \operatorname{prox}_{f_i}^{1 / \rho\left|\mathcal{N}_i\right|} \left( \sum\nolimits_{j \in \mathcal{N}_i} z_{i j, k} / \rho \left|\mathcal{N}_i\right| \right) \label{eq:admm-x} \\
    z_{i j, k+1} &= 0.5 \left( z_{i j, k} - z_{j i, k} + 2 \rho x_{j, k+1}\right) \label{eq:admm-z}
\end{align}
\end{subequations}
where $\rho>0$ is a penalty parameter, $\operatorname{prox}_{f_i}^{1 / \rho\left|\mathcal{N}_i\right|}(z) = \underset{x \in \mathbb{R}^n}{\arg \min }\{f_i(x) + (\rho\left|\mathcal{N}_i\right|/2) \|x-z\|^2 \}$, and $z_{ij,k}$ and $z_{ji,k}$ are edge-wise auxiliary variables.

\paragraph{Communication-efficiency 1}
The distributed implementation of~\eqref{eq:admm} entails agent $j \in \mathcal{N}_i$ sending $- z_{ji,k} + 2\rho x_{j,k+1}$ to $i$ \cite{bastianello_asynchronous_2021}. However, in a learning setting where $n \gg 1$, this requires prohibitively large bandwidth, and as a solution compressed communications can be employed.
Letting $\mathcal{C} : \R^n \to \R^n$ be a compression operator,%
\footnote{Precise assumptions will be introduced in section~\ref{subsec:convergence}.}
we could then allow agent $j$ to transmit $\mathcal{C} \left( - z_{ji,k} + 2\rho x_{j,k+1} \right)$. This design choice would reduce the communication burden, but would also result in inexact convergence.
Thus, we also integrate an \textit{error feedback} mechanism that asymptotically reduces to zero the error induced by compression.
In particular, we define the auxiliary variables $u_{i,k}$ and $s_{ij,k}$, and rewrite~\eqref{eq:admm-z} as:
\begin{equation} \label{eq:admm_z_compress}
    z_{i j, k+1} = 0.5 \left(\hat{z}_{i j, k} - \hat{z}_{j i, k} \right)+  r \rho x_{i, k+1} 
 - r \rho ( \hat{x}_{i, k+1}  - \hat{x}_{j, k+1} )
\end{equation}
where $r > 0$ and we define
\begin{subequations}
\begin{align} 
    \hat{{x}}_{i,k+1} &= u_{i, k+1} + \mathcal{C}( {x}_{i,k+1} -  {u}_{i, k+1} ), \label{eq:compress_x_local} \\
    \hat{{z}}_{ij,k+1} &= s_{ij, k+1} + \mathcal{C}( {z}_{ij,k+1} -  {s}_{ij, k+1} ) \label{eq:compress_z_local}
\end{align}
\end{subequations}
and, setting $\eta \in (0, 1]$,
\begin{equation} \label{eq:compress_auxiluary}
\begin{aligned}
    {u}_{i, k+1}  &=  (1- \eta){u}_{i, k} + \eta  \hat{x}_{i,k} \\
    s_{ij,k+1} &= \hat{z}_{ij,k}.
\end{aligned}
\end{equation}
%
% We remark that~\eqref{eq:admm_z_compress} requires the transmission of $\hat{x}_{j,k+1}$ from $j$ to $i$ -- but this variable is not compressed. The solution is to introduce a local copy $\tilde{u}_{i,k}$ of \todo{should be $\tilde{u}_j$?} of $u_{j,k}$. Initializing them as $\tilde{u}_{i,0} = u_{j,0}$ then the local copy updated as

% \todo{I'm not sure the copy idea is implemented correctly; could you please check that it is correct? And also write down how the copy is updated by $i$}

It can be seen that the transmission of $\hat{{x}}_{j,k+1}$ involves the exact transmission of $u_{j,k+1}$ because of \eqref{eq:compress_x_local}. To overcome this, we let agent $i$ keep a copy $\tilde{u}_{i,k}$ of $u_{j,k}$, according to \eqref{eq:compress_auxiluary}, set $\tilde{u}_{i,0}=u_{j,0}$,  agent $i$ can maintain $\tilde{u}_{i,k+1} = {u}_{j,k+1}$ by only receiving $\mathcal{C}( {x}_{j,k} -  {u}_{j, k} )$ by mathematical induction. Specifically, given $\tilde{u}_{i,k} ={u}_{j,k} $, $\tilde{u}_{i,k+1} = (1-\eta)\tilde{u}_{i,k} + \eta ( \tilde{u}_{i, k} + \mathcal{C}( {x}_{j,k} - {u}_{j, k} ) )={u}_{j,k+1} $, $\forall k \geq 0$. 
The same holds for $\hat{\mathbf{z}}_{ji,k}$, we let agent $i$ keep a copy $\tilde{s}_{ij,k}$ of $s_{ji,k}$.

\paragraph{Communication efficiency 2}
Besides reducing the required bandwidth via compression, in our design we also reduce the frequency of communications via local training.
Following the idea in \cite{ren2025communication}, we notice that update~\eqref{eq:admm-x} requires the solution of an optimization problem, which in general lacks a closed form.
The idea then is to allow the agents to approximate its solution with a finite number $\tau$ of gradient-based steps:
\begin{equation}\label{eq:local-training}
\begin{aligned}
 &\phi_{i,k}^{0} =x_{i, k} \\
&\phi_{i,k}^{t+1} = \phi_{i, k}^t + \hspace{3cm} t =0, \ldots, \tau-1 \\ &\ -\gamma g_i(\phi_{i,k}^{t}) -\beta \rho\left|\mathcal{N}_i\right| \Bigg(r^2 x_{i,k}-\frac{r}{\rho\left|\mathcal{N}_i\right|}\sum_{j \in \mathcal{N}_i} z_{i j, k} \Bigg)
\\&
x_{i, k+1}  =\phi_{i,k}^{\tau}
\end{aligned}
\end{equation}
where $g_i(\phi_{i,k}^{t}) = \nabla f_i(\phi_{i,k}^{t})$, $\gamma > 0$ is the step-size and $\beta > 0$ an additional regularization weight.
As a consequence, one round of communication is performed every $\tau$ local updates.

\paragraph{Computational efficiency}
The use of full gradient evaluations $g_i(\phi_{i,k}^{t}) = \nabla f_i(\phi_{i,k}^{t})$ might be computationally expensive when $m_i \gg 1$ in~\eqref{eq:erm-cost}.
Thus, we allow the agents to use a variance reduced stochastic gradient estimator \cite{defazio2014saga}, where
each agent maintains a table of component gradients $\{ \nabla f_{i,h}(r_{i, h, k}^t) \}, h = 1, \ldots,m_i$, where $r_{i, h, k}^t$ represents the most recent iterate at which the component gradient was computed. This table is reset at the start of each new local training, and the agents estimate their local gradients as
\begin{equation}\label{eq:saga-gradient}
\begin{split}
g_i\left(\phi_{i,k}^t\right) &= \frac{1}{|\mathcal{B}_i|} \sum_{h \in \mathcal{B}_i} \left( \nabla f_{i, h}\left( \phi_{i, k}^t \right) - \nabla f_{i, h}\left(r_{i, h, k}^{t} \right) \right) \\ &+\frac{1}{m_i} \sum_{h=1}^{m_i} \nabla f_{i, h}(r_{i, h, k}^{t}).
\end{split}
\end{equation}
where $\mathcal{B}_i$ represents a randomly selected subset of indices from $\{ 1, \ldots, m_i\}$, with $|\mathcal{B}_i| < m_i$.
The estimated gradient is then used to update $\phi_{i,k}^{t+1}$ according to~\eqref{eq:local-training}; after each update, the agents refresh their local memory by setting $r_{i,h,k}^{t+1} = \phi_{i,k}^{t+1}$ if $h \in \mathcal{B}_i$, and $r_{i,h,k}^{t+1} = r_{i,h,k}^{t}$ otherwise.
This update requires a full gradient computation at the beginning of each local training step, in the following steps ($t > 0$), each agent only computes $|\mathcal{B}_i|$ component gradients.

\smallskip

The result of our design is reported in Algorithm~\ref{alg:lt-saga-admm}.
\begin{algorithm}[!ht]
\caption{}
\label{alg:lt-saga-admm}
\begin{algorithmic}[1]
\Require For each node $i$, initialize $x_{i,0}= z_{ij, 0}$, $u_{i,0} = \tilde{u}_{i,0} =0$, $s_{ij,0} = \tilde{s}_{ij,0}= 0$, $j \in \mathcal{N}_i$. Set the penalty parameter $\rho>0$, the number of local training steps $\tau >0$, and the parameters $\gamma, \beta, r>0$, $0<\eta \leq 1$.
 
	\For{$k = 0,1,\ldots$ every agent $i$}
 \CommentState{local training}

    \State $\phi_{i,k}^0 = x_{i,k}$, {$r_{i, h, k}^{0}  = x_{i, k}$, for all $h \in \{ 1, \ldots. m_i \}$}
    % $\mathbf{u}_{i, k+1}  =  (1- \eta) \mathbf{u}_{i, k} + \eta  \hat{\mathbf{x}}_{i,k}$, $s_{ij,k+1} = \hat{\mathbf{z}}_{ij,k}$
    
    \For{$t = 0, 1, \ldots, \tau-1$}

\State Draw the batch $\mathcal{B}_i$ uniformly at random

\State Update the gradient estimator according to \eqref{eq:saga-gradient}

\State Update $\phi_{i,k}$ according to \eqref{eq:local-training}

\State {If $h \in \mathcal{B}_i$ update $r_{i,h,k}^{t+1} = \phi_{i,k}^{t+1}$, else $r_{i,h,k}^{t+1} = r_{i,h,k}^{t}$}
	
    \EndFor
    
\State Set $x_{i,k+1} = \phi_{i,k}^\tau$, update $u_{i,k+1} $ and $s_{ij,k+1}$ according to \eqref{eq:compress_auxiluary}, update $\hat{{x}}_{i,k+1}$ according to \eqref{eq:compress_x_local}
    
	\CommentState{communication}
    \State Transmit $\mathcal{C}( {z}_{ij,k} -  {s}_{ij, k} )$ and $\mathcal{C}( {x}_{i,k+1} -  {u}_{i, k+1} )$ to each neighbor $j \in \mathcal{N}_i$, and receive the corresponding transmissions
    
\CommentState{auxiliary update}
\State Update the local copy $\tilde{u}_{i,k+1}$ and $\tilde{s}_{ij,k+1}$, update $z_{ij,k+1}$ according to \eqref{eq:admm_z_compress}.
\State Update $ \hat{{z}}_{ij,k+1}$ according to \eqref{eq:compress_z_local}
	\EndFor
\end{algorithmic}
\end{algorithm}

\subsection{Convergence analysis}\label{subsec:convergence}
We start by introducing suitable assumptions on the compressor operator.

\begin{assumption}\label{as:compressor}
The compression operator $\mathcal{C}: \mathbb{R}^n \rightarrow \mathbb{R}^n$ is unbiased 
$ \mathbb{E}\bigl[\mathcal{C}(x)\bigr] = x $,
and there exists a constant $p > 1$ such that 
$\mathbb{E}\bigl[\|\mathcal{C}(x) \|^2\bigr] \le p\|x\|^2, \forall x \in \mathbb{R}^n.$
\end{assumption}

\begin{assumption}\label{as:independent compressor}
We assume that all agents' compressors are mutually independent among the agents, \textit{i.e.}, their outputs are mutually independent random variables.
\end{assumption}

\smallskip

We can now characterize the convergence of Algorithm~\ref{alg:lt-saga-admm}. The result is proved in the Appendix.

\begin{theorem}\label{th:convergence}
Let Assumptions~\ref{as:local-costs},~\ref{as:graph},~\ref{as:compressor}~and~\ref{as:independent compressor} hold.
Let   $\left\lbrace \mathbf{X}_{k} \right\rbrace_{k \in \N}$ be the trajectory generated by \alg.
Then with sufficiently small $\gamma$, bounded $p$, there exist positive parameters  $\beta, \tau, r, \rho$ such that the states $\mathbf{X}_k$ converge linearly to the optimal solution $\mathbf{X}^*$.
\end{theorem}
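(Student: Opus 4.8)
The plan is to recast \alg as a \emph{perturbed fixed-point iteration} and then establish geometric contraction of a carefully weighted Lyapunov function. First I would rewrite the idealized ADMM updates~\eqref{eq:admm} --- with the exact resolvent in~\eqref{eq:admm-x}, full gradients, and uncompressed communication --- in operator form $\mathbf{z}_{k+1} = \mathcal{T}(\mathbf{z}_k)$, where $\mathcal{T}$ is the reflected fixed-point operator underlying the edge-based ADMM of~\cite{bastianello_asynchronous_2021}. Under Assumption~\ref{as:local-costs} each resolvent is a contraction, so $\mathcal{T}$ is a contraction with some rate $\lambda \in (0,1)$, and its unique fixed point $\mathbf{z}^*$ encodes the optimizer $\mathbf{X}^* = \mathbf{1}_N \otimes x^*$ together with the limiting auxiliary states. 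This step pins down the baseline linear rate and the target the perturbed iteration must track.

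Next I would write the true update as $\mathbf{z}_{k+1} = \mathcal{T}(\mathbf{z}_k) + \mathbf{e}_k$, isolating three perturbations: (i) the \emph{local-training} error from replacing the exact prox by $\tau$ gradient-type steps in~\eqref{eq:local-training}; (ii) the \emph{stochastic-gradient} error from the SAGA estimator~\eqref{eq:saga-gradient}; and (iii) the \emph{compression} error from~\eqref{eq:compress_x_local}--\eqref{eq:compress_z_local} under error feedback~\eqref{eq:compress_auxiluary}. The purpose of the two feedback loops is that (ii) and (iii) are conditionally zero-mean: Assumption~\ref{as:compressor} makes $\mathcal{C}$ unbiased and SAGA is unbiased by construction, so given the past $\mathcal{F}_k$ only the local-training bias survives in $\mathbb{E}[\mathbf{e}_k \mid \mathcal{F}_k]$, while the variances of (ii) and (iii) are controlled by the distances of the SAGA memory $r_{i,h,k}^t$ and of the error-feedback states $u_{i,k}, s_{ij,k}$ to their fixed-point values $x^*, z^*$.

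The core of the argument is a composite Lyapunov function
\begin{align*}
V_k &= \norm{\mathbf{z}_k - \mathbf{z}^*}^2 + c_1 \, \mathbb{E}\Big[\textstyle\sum_{i,j}\big(\norm{u_{i,k}-x^*}^2 + \norm{s_{ij,k}-z^*}^2\big)\Big] \\
&\quad + c_2 \, \mathbb{E}\Big[\textstyle\sum_{i,h}\norm{\nabla f_{i,h}(r_{i,h,k}) - \nabla f_{i,h}(x^*)}^2\Big],
\end{align*}
with weights $c_1, c_2 > 0$ to be tuned, for which I would derive a one-step estimate $\mathbb{E}[V_{k+1}] \le (1-\delta) V_k$. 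The leading term contracts at rate $\lambda^2$ from $\mathcal{T}$, up to cross terms with $\mathbf{e}_k$; the error-feedback term contracts by the averaging in~\eqref{eq:compress_auxiluary}, with rate governed by $\eta$ and $p$, exploiting Assumption~\ref{as:independent compressor} to cancel inter-agent covariances; and the SAGA term contracts by the standard variance-reduction estimate, using $L$-smoothness and the per-epoch memory reset, at a rate set by $\gamma$ and the batch size. The local-training bias would be tied back to $\norm{\mathbf{z}_k - \mathbf{z}^*}$ via a geometric-sum bound that decays in $\tau$ and is shaped by $\beta, r, \rho$.

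The main obstacle is the coupling among the three mechanisms: each is individually contractive, but the descent inequality generates cross terms (compression variance feeding the state error, SAGA variance scaled by $\gamma$, local-training bias multiplied by factors of $\rho r^2$) that must all be dominated at once. The final step is therefore a parameter-selection argument: for $p$ bounded and $\gamma$ sufficiently small, I would exhibit a nonempty region of $(\beta, \tau, r, \rho, \eta, c_1, c_2)$ in which the resulting $3\times 3$ comparison system is a contraction --- e.g.\ via a small-gain / spectral-radius condition on the matrix of rates --- yielding $\delta > 0$ and hence $\mathbb{E}[V_k] \to 0$ linearly, which in turn implies $\mathbf{X}_k \to \mathbf{X}^*$ at a linear rate.
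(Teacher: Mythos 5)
Your endgame matches the paper's: both arguments close by assembling a small system of coupled one-step inequalities --- state error, error-feedback states, variance-reduction memory --- and imposing a spectral-radius / small-gain condition that holds for $\gamma$ sufficiently small and $p$ bounded. The paper's comparison vector has four components ($\mathbb{E}[\|\bar{x}_k-x^*\|^2]$, a consensus-error coordinate $\widehat{\mathbf{d}}_k$, $\mathbb{E}[\|\mathbf{X}_k-\mathbf{U}_k\|^2]$, and $\mathbb{E}[\|(\mathbf{I}+\mathbf{P})(\mathbf{Z}_k-\mathbf{S}_k)\|^2]$), and it eliminates the SAGA memory term $t_k$ by substitution --- the gradient table is reset at the start of every outer round, so its bound closes within one round --- rather than carrying it as a separate Lyapunov weight $c_2$ as you propose; both choices are workable. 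It also splits the state error into consensus error and optimality error of the average, because the contraction mechanisms differ on the two subspaces.

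The genuine gap is in your anchor point. You take the exact-resolvent operator $\mathcal{T}$ of \eqref{eq:admm} as the contractive baseline and treat the $\tau$ inner steps as a perturbation whose bias ``decays in $\tau$'' and is proportional to $\norm{\mathbf{z}_k-\mathbf{z}^*}$. But \eqref{eq:local-training} is \emph{not} an inexact evaluation of $\operatorname{prox}_{f_i}^{1/\rho|\mathcal{N}_i|}$: it uses two separate step sizes $\gamma$ and $\beta$ for the gradient and the quadratic coupling, and it freezes the coupling term at $x_{i,k}$ for all $t$ rather than evaluating it at $\phi_{i,k}^t$. Its $\tau\to\infty$ limit is therefore not the prox, and the fixed point of the actual dynamics need not coincide with the fixed point of $\mathcal{T}$; if it does not, your decomposition yields convergence only to a neighborhood, not the claimed exact linear convergence. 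You would have to separately verify that the stationary point of the modified inner loop, combined with \eqref{eq:admm_z_compress}, still encodes $x^*$ --- and that verification is essentially what the paper's preliminary transformation does. It never invokes the prox: it writes the $\tau$-step update in closed form \eqref{eq:compact-admm-x}, changes variables to $(\mathbf{Y}_k,\tilde{\mathbf{Y}}_k)$ in \eqref{y_tilde_y} so that the optimizer becomes the fixed point by construction, and shows the resulting linear map \eqref{eq:compact-admm-2} contracts on the consensus-orthogonal subspace with explicit rate $1-r^2\lambda_l\rho\tau\beta/2$, with strong convexity used only on the averaged iterate $\bar{x}_k$. Either redo your perturbation analysis around the true fixed point of the modified inner dynamics, or adopt the paper's linear-algebraic route; as written, the ``$\mathcal{T}$ plus vanishing bias'' step does not go through for this particular inner update.
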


%%%%%%%%%%%%%%%%%%%%%%%%%%%%%%%%%%%%%%%%%%%%%%%%%%%%%%%%%%%%%%%%%%%%%%%%%%%%%%%%
\section{Numerical Results}\label{sec:numerical}
In this section, we compare \alg with state-of-the-art alternatives for a classification task characterized by
\begin{equation}\label{eq:logistic-cost}
    f_i(x) = \sum\nolimits_{h = 1}^{m_i} \log\left( 1 + \exp(- b_i^h a_i^h x) \right) + (\epsilon / 2) \norm{x}^2
\end{equation}
with $a_i^h \in \R^n$, and $b_i^h \in \{ -1, 1 \}$ randomly generated.
We choose a ring network with $N = 10$, set $n = 5$ and $m_i = 100$, and use stochastic gradients with a batch of $|\mathcal{B}| = 1$.

%%%%%%%%%%%%%%%%%%%%%%%%%%%%%%
%%%%%%%%%%%%%%%%%%%%%%%%%%%%%%
\subsection{\alg performance}
We start by evaluating the performance of \alg with the following unbiased compressors. The other parameters are always set as $\tau = 5$, $\rho = 0.1$, $\beta = 0.2$, $\gamma = 0.3$, $r =1$.

\textit{$b$-bit quantizer}
The first compressor is defined as
$$
\mathcal{C}_1(x) = 
\frac{ \|x\|_{\infty}\,\mathrm{sign}(x) }{2^{b-1}}
\circ
\left\lfloor\;
2^{b-1}\,\frac{\lvert x\rvert}{\|x\|_{\infty}}
\;+\;\kappa
\right\rfloor
$$
where $\mathrm{sign}(x)$, $\lvert\,\cdot\,\rvert$, $\lfloor\,\cdot\,\rfloor$ are applied element-wise, $\circ$ is the element-wise product; and $\kappa \sim \mathcal{U}[0, 1]^n$ is a random perturbation.

\textit{Rand-$k$}
The second compressor is defined as
$$
    \mathcal{C}_2(x) \;=\; \frac{n}{k} \sum\nolimits_{i \in S} x_i \, e_i,
$$
where $S \subset [n]$ is  a subset of cardinality $k$ chosen uniformly at random, and $\{ e_1,\ldots,e_d \}$ is the standard basis in $\mathbb{R}^n$.

\smallskip

Figure~\ref{fig:compressor-comparison} shows the evolution of $\norm{\nabla F(\bar{x}_k)}^2$, with $\bar{x}_k = (1 / N) \sum_{i = 1}^N x_{i,k}$ over the iterations $k \in \N$.
\begin{figure}[!ht]
\centering
\includegraphics[scale=0.4]{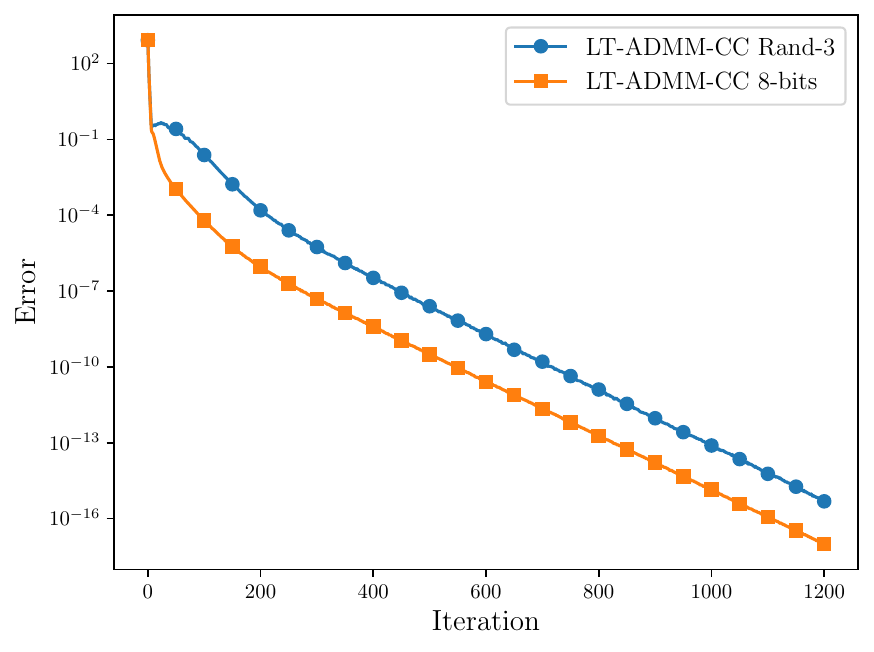}
\caption{ \alg with different compressors.}
\label{fig:compressor-comparison}
\end{figure}
As predicted by the theory, the algorithm achieves exact convergence for both $\mathcal{C}_1$ and $\mathcal{C}_2$, although the specific compressor might affect the speed of convergence.

\subsection{Comparison}
We compare now the performance of \alg with alternative distributed optimization methods that employ compression: CEDAS \cite{huang2024cedas}, COLD \cite{zhang2023innovation}, DPDC \cite[Algorithm 1]{yi2022communication}, and LEAD \cite{liulinear}.
For all algorithms, we use an $8$-bit quantizer and stochastic gradients with batch-size $|\mathcal{B}| = 1$; we also hand-tune the parameters of the algorithms to achieve optimal performance.

These algorithms have different computational and communication complexities. To account for this in our simulations, we assign a time cost of $\tgrad$ for a component gradient evaluation ($\nabla f_{i,h}$), and of $\tcomm$ for a round of communications. The total time-cost incurred by each algorithm is then reported in Table~\ref{tab:time-comparison}.
\begin{table}[!ht]
\centering
\caption{Computation time of the algorithms over $\tau$ iterations.}
\label{tab:time-comparison}
\begin{tabular}{cc}
    \hline
    Algorithm [Ref.] & Time \\
    \hline
      LEAD \cite{liulinear} 
      & $\tau (\tgrad +  \tcomm)$ \\
         CEDAS \cite{huang2024cedas}
      & $\tau (\tgrad + 2 \tcomm)$  \\
  COLD \cite{zhang2023innovation}\&  DPDC \cite{yi2022communication}  &  $\tau \left( \tgrad + \tcomm \right)$ stochastic gradient\\
   COLD \cite{zhang2023innovation}\&  DPDC \cite{yi2022communication}  &  $\tau \left(m_i \tgrad + \tcomm \right)$ full gradient \\
    \hline
    \alg & $(m_i + \tau - 1) \tgrad + 2\tcomm$  \\
    \hline
\end{tabular}
\end{table}

Figure~\ref{fig:error-comparison} reports the evolution of $\norm{\nabla F(\bar{x}_k)}^2$ against time, with $\tcomm = 10 \tgrad$ to represent a scenario where communication is expensive.
\begin{figure}[!ht]
    \centering
    \includegraphics[scale=0.4]{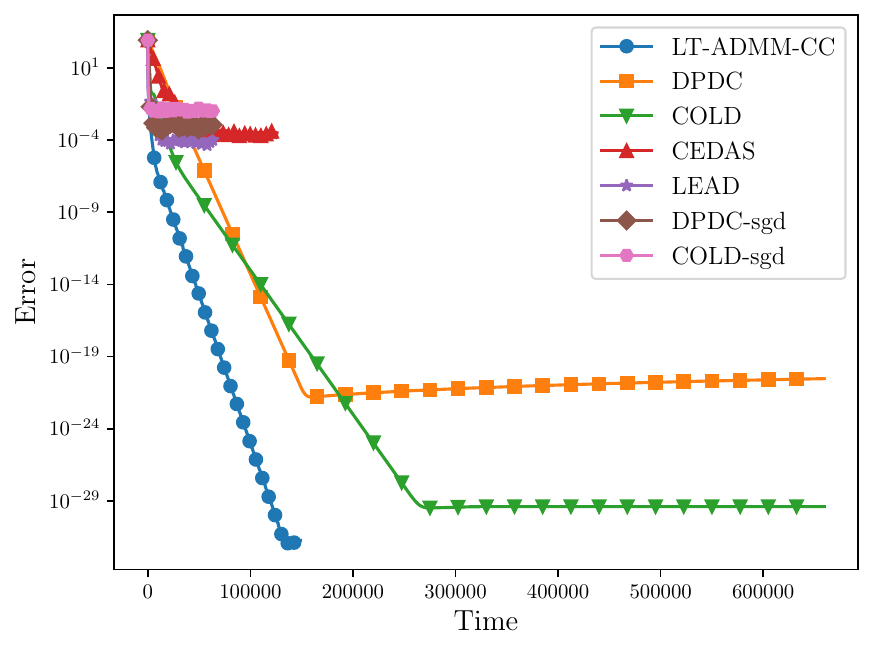}
    \caption{Comparison of distributed optimization algorithms with compressed communication.}
    \label{fig:error-comparison}
\end{figure}
We notice that LEAD, CEDAS, DPDC-sgd, COLD-sgd only converge to a neighborhood of the optimal solution. This is due to the fact that they employ stochastic gradients, but no variance reduction.
\alg instead converges exactly owing to variance reduction and error feedback.
We remark that DPDC and COLD also employ error feedback, hence they converge exactly when using full gradients. Notice, however, that using full gradients entails a higher time-cost, as demonstrated by their slower convergence as compared to \alg.

%%%%%%%%%%%%%%%%%%%%%%%%%%%%%%%%%%%%%%%%%%%%%%%%%%%%%%%%%%%%%%%%%%%%%%%%%%%%%%%%
\section{Concluding Remarks}
In this paper, we proposed a novel distributed learning algorithm that is jointly computation- and communication-efficient. This is achieved by integrating communication compression, local training, and variance reduction.
We proved the exact, linear convergence of the algorithm, and compared it numerically with state-of-the-art alternatives.
Future research efforts will be devoted to weakening some assumptions, for example the strong convexity one, and to apply the proposed algorithm to real-world scenarions.

%%%%%%%%%%%%%%%%%%%%%%%%%%%%%%%%%%%%%%%%%%%%%%%%%%%%%%%%%%%%%%%%%%%%%%%%%%%%%%%%
\appendix

In the following, we report the proof of Theorem~\ref{th:convergence}.

\subsection{Preliminary transformation}
Denote  $ \mathrm{\Phi}_k^t = \operatorname{col}\{\phi^t_{1,k}, \phi_{2,k}^t, ..., \phi_{N,k}^t\}
$, $ G(\mathrm{\Phi}_k^t) = \operatorname{col}\{ g_1(\phi_{1,k}^t), g_2(\phi_{2,k}^t),..., g_N(\phi_{N,k}^t)\}
$, $ \mathrm{F}(\mathbf{X}) = \operatorname{col}\{  f_1(x_1),   f_2(x_2),...,  f_N(x_N)\}$, $\mathbf{Z} = \operatorname{col}\{z_{ij}\}_{i,j \in \mathcal{E}} 
$.
Define 
$
\mathbf{A}= \operatorname{blk\,diag}\{ \mathbf{1}_{d_i} \}_{i \in \mathcal{V}} \otimes \mathbf{I}_n \in \mathbb{R}^{Mn \times Nn},
$
where $d_i = |\mathcal{N}_i|$ is the degree of node $i$, and $M = \sum_i |\mathcal{N}_i|$.
$\mathbf{P} \in \mathbb{R}^{Mn \times Mn}$ is a permutation matrix that swaps $e_{ij}$  with $e_{ji}$. 
%If there is an edge between nodes $i$, $j$, then $A^T[i,:]PA[:,j] = 1$, otherwise $A^T[i,:]PA[:,j] = 0$.
$\mathbf{A}^T\mathbf{P}\mathbf{A} = \Tilde{\mathbf{A}}$ is the adjacency matrix, $  \mathbf{A}^T\mathbf{A} = \operatorname{diag}\{ d_i \mathbf{I}_n \}_{i \in \mathcal{V}}$ is the degree matrix, denote $d_u$ as the largest degree among the agents. Denote the largest and smallest nonzero eigenvalue of $\mathbf{L} = \mathbf{D} - \Tilde{\mathbf{A}}$ as  $\lambda_u$ and $\lambda_l$, respectively.

The compact form of \alg is:
\begin{subequations}\label{eq:compact-admm}
\begin{equation} \label{eq:compact-compress}
\begin{aligned}
&\hat{\mathbf{X}}_{k} = \mathbf{U}_k + \mathcal{C}( \mathbf{X}_k -  \mathbf{U}_k ), \quad
\mathbf{U}_{k+1}  =  (1- \eta) \mathbf{U}_k + \eta  \hat{\mathbf{X}}_{k}, 
\\&\hat{\mathbf{Z}}_{k} = \mathbf{S}_k + \mathcal{C}( \mathbf{Z}_k -\mathbf{S}_k ), \qquad
\mathbf{S}_{k+1}  = \hat{\mathbf{Z}}_{k} 
\end{aligned}
\end{equation}
\begin{equation}
\mathbf{X}_{k+1} = \mathbf{X}_{k} - \gamma \sum_{t=0}^{\tau -1}\nabla G(\mathrm{\Phi}_k^t) - \beta r \sum_{t=0}^{\tau -1}(r\rho \mathbf{A}^T\mathbf{A}\mathbf{X}_k - \mathbf{A}^T \mathbf{Z}_k) \label{eq:compact-admm-x}
\end{equation}
\begin{equation}
\mathbf{Z}_{k+1} =  \frac{1}{2} (\mathbf{I} - \mathbf{P})\hat{\mathbf{Z}}_{k} + r \rho \mathbf{A}\mathbf{X}_{k+1} - r \rho (\mathbf{I} - \mathbf{P}) \mathbf{A} \hat{\mathbf{X}}_{k+1}. \label{eq:compact-admm-z}
\end{equation}
\end{subequations}

We introduce the following variables
\begin{equation} \label{y_tilde_y}
\begin{aligned}
&\mathbf{Y}_k= r \mathbf{A}^T \mathbf{Z}_{k} - \frac{\gamma}{\beta}\nabla F(\mathbf{\bar{X}}_k) -r^2\rho \mathbf{D} \mathbf{X}_k
\\& \Tilde{\mathbf{Y}}_k= r\mathbf{A}^T \mathbf{P}\mathbf{Z}_{k} + \frac{\gamma}{\beta} \nabla \mathrm{F}(\bar{\mathbf{X}}_k) - r^2\rho \mathbf{D} \mathbf{X}_k,
\end{aligned}
\end{equation}
where $\bar{\mathbf{X}}_k = \mathbf{1}_N \otimes \bar{x}_k$, with $\bar{x}_k = \frac{1}{N} \mathbf{1}^T \mathbf{X}_k$.
Multiplying both sides of \eqref{eq:compact-admm-z} by $r\mathbf{1}^T\mathbf{A}^T$, and using the initial condition, we obtain $r\mathbf{1}^T\mathbf{A}^T\mathbf{Z}_{k+1} = r^2\rho \mathbf{1}^T \mathbf{D} \mathbf{X}_{k+1}$ for all $k\geq 0$.
As a consequence $\bar{\mathbf{Y}}_k = \mathbf{1} \otimes \frac{1}{N}  \mathbf{1}^T\nabla \mathrm{F}(\bar{\mathbf{X}}_k) = \frac{\gamma}{\beta}\mathbf{1} \otimes \frac{1}{N} \sum_{i}\nabla f_i(\bar{x}_k)$, and \eqref{eq:compact-admm} can be rewritten as
\begin{equation}\label{eq:compact-admm-2}
\begin{aligned}
& \begin{bmatrix}
\mathbf{X}_{k+1}\\
\mathbf{Y}_{k+1}\\
\Tilde{\mathbf{Y}}_{k+1}
\end{bmatrix} = \begin{bmatrix}
\mathbf{I}  &  \beta \tau \mathbf{I} & \mathbf{0}  \\
\rho \Tilde{\mathbf{L}}  &  \rho \Tilde{\mathbf{L}}\beta \tau +  \frac{1}{2} \mathbf{I}  & - \frac{1}{2}\mathbf{I}  \\
\mathbf{0}  &   - \frac{1}{2}\mathbf{I} &  \frac{1}{2} \mathbf{I}
\end{bmatrix} \otimes \mathbf{I}_n \begin{bmatrix}
     \mathbf{X}_{k}\\
    \mathbf{Y}_{k}\\
   \Tilde{\mathbf{Y}}_{k}
\end{bmatrix} 
-   \mathbf{h}_k, 
\end{aligned}
\end{equation}
where 
$
\Tilde{\mathbf{L}} = r^2(\tilde{\mathbf{A}}- \mathbf{D})= -r^2 \mathbf{L}
$
and 
$$
\begin{aligned}
\mathbf{h}_k &= 
\Bigg[ \gamma  \sum_{t=0}^{\tau -1}( \nabla G(\mathrm{\Phi}_k^t) -  \nabla \mathrm{F}(\bar{\mathbf{X}}_k) ) ; \\
&\gamma  \rho  \Tilde{\mathbf{L}}\sum_{t=0}^{\tau -1}( \nabla G(\mathrm{\Phi}_k^t) -  \nabla \mathrm{F}(\bar{\mathbf{X}}_k)  ) \\&+
\frac{\gamma}{\beta}(\nabla  \mathrm{F}(\bar{\mathbf{X}}_{k+1}) -  \nabla  \mathrm{F}(\bar{\mathbf{X}}_{k}) )+ \Tilde{\mathbf{L}}(  \mathbf{X}_{k+1} - \hat{\mathbf{X}}_{k+1} )  ; \\& \frac{\gamma}{\beta}(-\nabla  \mathrm{F}(\bar{\mathbf{X}}_{k+1}) +  \nabla  \mathrm{F}(\bar{\mathbf{X}}_{k})  )- \Tilde{\mathbf{L}}(  \mathbf{X}_{k+1} - \hat{\mathbf{X}}_{k+1} )  \\&- \frac{r\mathbf{A}^T(\mathbf{I} - \mathbf{P})  }{2} ( \mathbf{Z}_{k} - \hat{\mathbf{Z}}_{k}) \Bigg].
\end{aligned}$$

Denote 
$
\|\widehat{\mathbf{\Phi}}_k\|^2=\sum_{i=1}^N \sum_{t=0}^{\tau-1} \|\phi_{i, k}^t-\bar{x}_k \|^2 =\sum_{t=0}^{\tau-1}\|\mathbf{\Phi}_{k}^t-\bar{X}_k \|^2 
$, using Assumption~\ref{as:local-costs} we derive that
\begin{align*}
&\| \sum_{t=0}^{\tau -1}( G(\mathrm{\Phi}_k^t) -  \nabla \mathrm{F}(\bar{\mathbf{X}}_k)   )\|^2
\\&
\leq 2\tau L^2 \|\widehat{\mathbf{\Phi}}_k \|^2
 +   2\tau \sum_{t=0}^{\tau -1} \| G(\mathrm{\Phi}_k^t)  - \nabla F(\mathrm{\Phi}_k^t)   \|^2.
\end{align*}

Denote $\overline{G}(\mathrm{\Phi}_k^t)=\frac{1}{N}  \sum_{i=1}^{N} g_i (\phi_{i, k}^t)$, we have
\begin{equation} \label{G(K)}
\begin{aligned}
&\|\sum_{t=0}^{\tau -1} \overline{G}(\mathrm{\Phi}_k^t) \|^2 
= \|\frac{1}{N} \sum_i \sum_t (\nabla f_i\left(\phi_{i, k}^t\right)-\nabla f_i\left(\bar{x}^k\right) 
\\&+\nabla f_i\left(\bar{x}^k\right)  +  g_i\left(\phi_{i, k}^t\right) -   \nabla f_i\left(\phi_{i, k}^t\right) ) \|^2 
\\&  \leq \frac{3 \tau L^2}{N} \|\widehat{\mathrm{\Phi}}^k\|^2  +  \frac{3 \tau}{N} \sum_i \sum_t\|   g_i\left(\phi_{i, k}^t\right) -   \nabla f_i\left(\phi_{i, k}^t\right) \|^2 
\\&+   3 \tau^2 \| \nabla F(\bar{x}_k) \|^2.
\end{aligned}
\end{equation}
We also have
$\|\nabla F(\bar{\mathbf{X}}_{k+1}) - \nabla F(\bar{\mathbf{X}}_{k}) \|^2 =N L^2 \left\| \bar{x}_{k+1} - \bar{x}_{k} \right\|^2 
 = N  L^2 \gamma^2 \|\sum_t \overline{G}(\mathrm{\Phi}_k^t)\|^2,
$
using  Assumption~\ref{as:compressor}, it further holds that:
\begin{equation} \label{eq:h_k_0}
\begin{aligned}
& \|{\mathbf{h}}_{k}\|^2 
 \leq \gamma^2 ( 1+ 3\rho^2 \|\Tilde{\mathbf{L}} \|^2)  ( 2\tau L^2 \|\widehat{\mathbf{\Phi}}_k \|^2
  \\&+ 2\tau \sum_{t=0}^{\tau -1} \| G(\mathrm{\Phi}_k^t)  - \nabla F(\mathrm{\Phi}_k^t)   \|^2 ) 
\\& + 18 L^2 \frac{\gamma^4}{\beta^2} ( \tau L^2\|\widehat{\mathrm{\Phi}}^k\|^2  + \tau \sum_i \sum_t\left\|   g_i\left(\phi_{i, k}^t\right) -   \nabla f_i\left(\phi_{i, k}^t\right)   \right\|^2 \\&+ N \tau^2\|\nabla F(\bar{x}_k)\|^2)
 + 6 r^4 (p-1) \lambda_u^2  \| \mathbf{X}_{k+1} - {\mathbf{U}}_{k+1} \|^2 \\&+ \frac{3r^2d_u}{4}  \| (\mathbf{I} - \mathbf{P})  ( \mathbf{Z}_{k} - \hat{\mathbf{Z}}_{k}) \|^2.
\end{aligned}
\end{equation}

\subsection{Key bounds}
\begin{lemma} \label{lem:devitaion_aver}
Let Assumption~\ref{as:graph} hold, when  $\beta <  \frac{2}{r^2\tau\lambda_u\rho}$,   
 \begin{equation} \label{X_Y_d} 
\|  \bar{\mathbf{X}}_k- \mathbf{X}_k \|^2 \leq \frac{18\beta \tau}{r^2\lambda_l\rho} \|  \widehat{\mathbf{d}}_k\|^2, \quad \|  \bar{\mathbf{Y}}_k- \mathbf{Y}_k \|^2 \leq 9 \|  \widehat{\mathbf{d}}_k \|^2,
\end{equation}
\begin{equation}\label{eq:delta-hat}
\widehat{\mathbf{d}}_{k+1}=\mathbf{\Delta} \widehat{\mathbf{d}}_k- \widehat{\mathbf{h}}_{k},
\end{equation}
where $
\widehat{\mathbf{d}}_k = \widehat{\mathbf{V}}^{-1}
\begin{bmatrix}
\widehat{\mathbf{Q}}^T  \mathbf{X}_{k};\widehat{\mathbf{Q}}^T \mathbf{Y}_{k};\widehat{\mathbf{Q}}^T \Tilde{\mathbf{Y}}_{k}
\end{bmatrix}
$, $\widehat{\mathbf{h}}_{k} =  \widehat{\mathbf{V}}^{-1} \widehat{\mathbf{Q}}^T  \mathbf{h}_k$. $\mathbf{\Delta}$ is a block-diagonal matrix  satisfies $\delta= \|\mathbf{\Delta}\| = 1 - {r^2\lambda_l\rho \tau \beta} /{2}$. 
% $\widehat{\mathbf{V}}^{-1}$ is a matrix defined by  $r^2\lambda\rho \tau \beta$.
\end{lemma}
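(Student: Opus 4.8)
The plan is to pass to the eigenbasis of the graph Laplacian and show that, once the consensus directions are projected out, the linear part of the recursion \eqref{eq:compact-admm-2} acts as a strict contraction on the remaining deviation coordinates. Since $\mathcal G$ is connected and undirected (Assumption~\ref{as:graph}), $\mathbf L = \mathbf D - \Tilde{\mathbf A}$ is symmetric positive semidefinite with a simple zero eigenvalue, so I would write $\mathbf L = \mathbf Q\,\operatorname{diag}(0,\mathbf\Lambda)\,\mathbf Q^T$ with $\mathbf Q = [\mathbf 1/\sqrt N,\ \widehat{\mathbf Q}]$ orthogonal and $\mathbf\Lambda = \operatorname{diag}(\lambda_l,\dots,\lambda_u)$ collecting the nonzero eigenvalues. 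First I would left-multiply each block of \eqref{eq:compact-admm-2} by $\widehat{\mathbf Q}^T$ (tensored with $\mathbf I_n$). Because $\Tilde{\mathbf L}\mathbf 1 = 0$ and, using the invariant $r\mathbf 1^T\mathbf A^T\mathbf Z_{k+1} = r^2\rho\,\mathbf 1^T\mathbf D\mathbf X_{k+1}$ derived above together with \eqref{y_tilde_y}, the averaged variables $\bar{\mathbf X}_k,\bar{\mathbf Y}_k,\bar{\Tilde{\mathbf Y}}_k$ all lie along $\mathbf 1$, we have $\widehat{\mathbf Q}^T\bar{\mathbf X}_k=\widehat{\mathbf Q}^T\bar{\mathbf Y}_k=0$. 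Hence $\widehat{\mathbf Q}^T\mathbf X_k,\ \widehat{\mathbf Q}^T\mathbf Y_k,\ \widehat{\mathbf Q}^T\Tilde{\mathbf Y}_k$ capture exactly the deviations from the average, and their dynamics decouple cleanly from the consensus component.

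In this basis the projected linear map splits into $N-1$ decoupled $3\times 3$ systems, one per nonzero eigenvalue $\lambda$, each governed by
\[
\mathbf M_\lambda=\begin{bmatrix} 1 & \beta\tau & 0\\ -\rho r^2\lambda & \tfrac12 - \rho r^2\lambda\,\beta\tau & -\tfrac12\\ 0 & -\tfrac12 & \tfrac12\end{bmatrix}.
\]
I would then compute its characteristic polynomial, which factors as $-\sigma\big(\sigma^2-(2-c)\sigma+(1-\tfrac c2)\big)$ with $c=r^2\lambda\rho\tau\beta$. Thus one eigenvalue is $0$, with right eigenvector $(-\beta\tau,1,1)$ lying in the kernel of $\mathbf M_\lambda$, while the other two have sum $2-c$, product $1-\tfrac c2$, and discriminant $c(c-2)$. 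The hypothesis $\beta<2/(r^2\tau\lambda_u\rho)$ forces $c\le r^2\lambda_u\rho\tau\beta<2$ for every nonzero $\lambda$, so the discriminant is negative and these two eigenvalues form a complex-conjugate pair $(1-\tfrac c2)\pm i\sqrt{\tfrac c2(1-\tfrac c2)}$, whose real part $1-\tfrac c2$ and modulus $\sqrt{1-\tfrac c2}$ both sit strictly inside the unit disk. The dominant mode $\lambda=\lambda_l$ then sets the contraction factor $\delta$ appearing in \eqref{eq:delta-hat}.

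Next I would build $\widehat{\mathbf V}$ as the block change of basis sending each $\mathbf M_\lambda$ to its real normal form — a scaled rotation on the two-dimensional non-null part and $0$ on the null direction — composed with a diagonal rescaling that balances the asymmetric off-diagonal entries $\beta\tau$ (row $1$) against $\rho r^2\lambda$ (row $2$). Collecting these blocks defines the block-diagonal $\mathbf\Delta$, and setting $\widehat{\mathbf d}_k=\widehat{\mathbf V}^{-1}[\widehat{\mathbf Q}^T\mathbf X_k;\widehat{\mathbf Q}^T\mathbf Y_k;\widehat{\mathbf Q}^T\Tilde{\mathbf Y}_k]$ and $\widehat{\mathbf h}_k=\widehat{\mathbf V}^{-1}\widehat{\mathbf Q}^T\mathbf h_k$ turns the projected recursion into \eqref{eq:delta-hat}, with $\|\mathbf\Delta\|=\delta$ fixed by the $\lambda_l$ block. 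Finally, the two bounds in \eqref{X_Y_d} follow by reading off norms: orthonormality of $\widehat{\mathbf Q}$ gives $\|\mathbf X_k-\bar{\mathbf X}_k\|^2=\|\widehat{\mathbf Q}^T\mathbf X_k\|^2$ and $\|\mathbf Y_k-\bar{\mathbf Y}_k\|^2=\|\widehat{\mathbf Q}^T\mathbf Y_k\|^2$, and expressing these as the corresponding blocks of $\widehat{\mathbf V}\widehat{\mathbf d}_k$ and bounding them by the block operator norms of $\widehat{\mathbf V}$ yields the constant $18\beta\tau/(r^2\lambda_l\rho)$ for the $\mathbf X$-block (which carries the extra factor $\beta\tau/(r^2\lambda_l\rho)$ from the balancing) and $9$ for the $\mathbf Y$-block.

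The \emph{main obstacle} is the construction and conditioning of $\widehat{\mathbf V}$ in the third step. Because the $\mathbf M_\lambda$ are non-symmetric with complex spectrum, bounding the spectral radius is not enough: one must exhibit a single similarity that simultaneously block-diagonalizes every mode into a genuine contraction and whose norm and inverse-norm are controlled in closed form, since it is precisely the condition number of $\widehat{\mathbf V}$ that pins down both $\delta$ and the explicit constants $18\beta\tau/(r^2\lambda_l\rho)$ and $9$. Getting the diagonal balancing right — so that the rotation blocks become normal and the $\mathbf X$- versus $\mathbf Y$-scalings emerge as the stated asymmetric constants — is the delicate bookkeeping at the heart of the lemma.
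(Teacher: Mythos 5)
Your reduction is the same one the paper uses: project onto the orthogonal complement of consensus with $\widehat{\mathbf{Q}}$, observe that the projected recursion \eqref{eq:compact-admm-2} decouples into one $3\times 3$ block per nonzero Laplacian eigenvalue, and diagonalize each block. Your block matrix $\mathbf{M}_\lambda$, its characteristic polynomial $-\sigma\left(\sigma^2-(2-c)\sigma+(1-\tfrac c2)\right)$ with $c=r^2\lambda\rho\tau\beta$, and the kernel vector $(-\beta\tau,1,1)$ all agree with the paper's explicit eigenvector matrix $\mathbf{V}_i$ (its first column is exactly $(-\beta\tau,1,1)$, and the square root $(\beta\tilde{\lambda}_i\rho\tau(\beta\tilde{\lambda}_i\rho\tau+2))^{0.5}$ appearing in $d_{12},d_{13}$ is your discriminant $\sqrt{c(c-2)}$). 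The paper performs a complex eigenvector diagonalization where you propose a real normal form; that difference is cosmetic.

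Two gaps remain. First, your own (correct) computation gives eigenvalue modulus $\sqrt{1-c/2}$, so the block-diagonal normal form $\mathbf{\Delta}$ you construct has $\|\mathbf{\Delta}\|=\sqrt{1-r^2\lambda_l\rho\tau\beta/2}$, not the $\delta=1-r^2\lambda_l\rho\tau\beta/2$ asserted in the lemma; you gloss over this by saying the dominant mode ``sets'' $\delta$ without reconciling the two quantities (the stated $\delta$ is the squared modulus, equivalently the real part, of the dominant pair, and as written the lemma's value does not follow from your construction). Second, the quantitative content of \eqref{X_Y_d} --- the constants $18\beta\tau/(r^2\lambda_l\rho)$ and $9$ --- is precisely the conditioning of $\widehat{\mathbf{V}}$ that you defer as the ``main obstacle.'' Identifying the obstacle is not the same as overcoming it: the paper resolves it by writing out $\mathbf{V}_i$ entrywise (the quantities $d_{12},d_{13},d_{22},d_{23}$), so that the relevant row norms of $\mathbf{V}$, and hence those constants, can be read off, leaning on \cite[Lemma 1]{ren2025communication} for the bookkeeping. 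Without that computation, the two inequalities in \eqref{X_Y_d} are asserted rather than proved.
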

\begin{proof}
Refer to \cite[Lemma 1]{ren2025communication}, we know that if a matrix $\widehat{\mathbf{Q}} \in \mathbf{R}^{N \times (N-1)}$ the matrix satisfying $\widehat{\mathbf{Q}} \widehat{\mathbf{Q}} ^T=\mathbf{I}_N-\frac{1}{N} \mathbf{1 1}{ }^T$,  $\widehat{\mathbf{Q}} ^T \widehat{\mathbf{Q}} =\mathbf{I}_{N-1}$  and $\mathbf{1}^T \widehat{\mathbf{Q}} =0$, $\widehat{\mathbf{Q}} ^T \mathbf{1}=0$.
We diagonalize $\mathbf{D}_i=\mathbf{V}_i \mathbf{\Delta}_i \mathbf{V}_i^{-1}$, where $\mathbf{\Delta}_i$ is the diagonal matrix of $\mathbf{D}_i$'s eigenvalues, and  
\begin{equation}
 \mathbf{V}_i= 
 \begin{bmatrix}
   -\beta \tau& d_{12}& d_{13} \\
  1&   d_{22}& d_{23}  \\
   1 &1 & 1
\end{bmatrix}
\end{equation} with
$d_{12}= -\beta\tau  + ((\beta\tilde{\lambda}_i\rho\tau(\beta\tilde{\lambda}_i\rho\tau + 2))^{0.5})/(\tilde{\lambda}_i\rho)$, $d_{13} = -\beta\tau  - ((\beta\tilde{\lambda}_i\rho\tau(\beta\tilde{\lambda}_i\rho\tau + 2))^{0.5} )/(\tilde{\lambda}_i\rho)$, $d_{22} =\tilde{\lambda}_i\rho d_{12} -1$, $d_{23}= \tilde{\lambda}_i\rho d_{13} -1$,  $\tilde{\lambda}_i<0$ is the nonzero eigenvalue of $\Tilde{\mathbf{L}}$.
We conclude that we can write $\mathbf{\Theta}=(\mathbf{P}_0 \boldsymbol{\phi})^T \mathbf{V} \boldsymbol{\Delta} \mathbf{V}^{-1} (\mathbf{P}_0 \boldsymbol{\phi})$ where $\mathbf{V}=\operatorname{blkdiag}\left\{V_i\right\}_{i=2}^N $ and 
$\boldsymbol{\Delta}=\operatorname{blkdiag}\left\{\mathbf{\Delta}_i\right\}_{i=2}^N$.
Moreover, $\|\mathbf{\Delta}\| = 1 - {r^2\lambda_l\rho \tau \beta}/{2}$
when $r^2\lambda_u\rho \tau \beta <2$.
% \begin{equation} \label{eq:gamma_d_k}
% r^2\lambda_u\rho \tau \beta <2.
% \end{equation}
%
$\widehat{\mathbf{V}}^{-1}=\mathbf{V}^{-1}(\mathbf{P}_0 \boldsymbol{\phi})$, where $\mathbf{P}_0$ is a permutation matrix and $\boldsymbol{\phi}$ is an orthogonal matrix, yields
$
\widehat{\mathbf{d}}_{k+1}=\mathbf{\Delta} \widehat{\mathbf{d}}_k- \widehat{\mathbf{h}}_{k},
$
where $
\widehat{\mathbf{d}}_k = \widehat{\mathbf{V}}^{-1}
\begin{bmatrix}
\widehat{\mathbf{Q}}^T  \mathbf{X}_{k};
   \widehat{\mathbf{Q}}^T \mathbf{Y}_{k};
   \widehat{\mathbf{Q}}^T \Tilde{\mathbf{Y}}_{k}
\end{bmatrix}
$,
$
\widehat{\mathbf{h}}_{k} =  \widehat{\mathbf{V}}^{-1} \widehat{\mathbf{Q}}^T  \mathbf{h}_k
$.
\end{proof}

Now we derive an upper bound for $\mathrm{E}[\|g_i(\phi_{i, k}^t)-\nabla f_i({\phi}_i^k)\|^2]$, which is the variance of the gradient estimator. Define $t_i^k$ as the averaged optimality gap of the auxiliary variables of $\left\{\mathbf{r}_{i, j}^k\right\}_{j=1}^{m_i}$ at node $i$ as follows:
\begin{align*}
&t_{i,k}^{t}=\frac{1}{m_i} \sum_{h=1}^{m_i}\|r_{i, h,k}^{t}-\bar{x}_k\|^2, 
\\&
t^{t}_k=\sum_{i=1}^N t_{i,k}^{t} = \frac{1}{m_i} \sum_{h=1}^{m_i}\|\mathbf{r}_{h,k}^{t}-\bar{\mathbf{X}}_k\|^2, 
\\&{t}_k=\sum_{t=0}^{\tau-1}t^{t}_k =\sum_{t=0}^{\tau -1 } \sum_{i=1}^N t_{i,k}^{t}.
\end{align*}
Then using  $ \mathbb{E} [ \|a- \mathbb{E}[a] \|^2 ] \leq \mathbb{E} [ \| a \|^2 ]$ with $ a = \nabla f_{i, h}( \phi_{i, k}^t )-\nabla f_{i, h}(r_{i, h, k}^{t}) $,
\begin{equation}
\begin{aligned}
& \mathrm{E}\left[\|g_i(\phi_{i, k}^t) - \nabla f_i\left( \phi_{i, k}^t \right)\|^2 \right]
\\& \leq \mathrm{E}\left[\left\|\nabla f_{i, s_i^k}( \phi_{i, k}^t )-\nabla f_{i, s_i^k}\left(\mathbf{r}_{i, s_i^k}^k\right)\right\|^2 \right] 
\\& =\frac{1}{m_i} \sum_{j=1}^{m_i} \|(\nabla f_{i, j} (\phi_{i, k}^t)-\nabla f_{i, j}(\overline{x}_k))  
\\
&+   (\nabla f_{i, j}\left(\overline{x}_k\right)-\nabla f_{i, j}(\mathbf{r}_{i, j}^{k}) ) \|^2 
\\
& \leq 2 L^2\left\|\phi_{i, k}^t -\bar{x}_k\right\|^2+2 L^2 t_i^{k}, 
\end{aligned}
\end{equation}
it follows that
\begin{equation}
\sum_i \sum_t\left\|   g_i\left(\phi_{i, k}^t\right) -   \nabla f_i(\phi_{i, k}^t)   \right\|^2 \leq 2L^2 \|\widehat{\mathbf{\Phi}}_k \|^2 + 2 L^2 t_k.
\end{equation}

\begin{lemma}
Let Assumptions hold; when $\beta <  \frac{2}{r^2\tau\lambda_u\rho}$ and 
\begin{equation} \label{eq:gamma_phi}
4 \gamma^2 \tau(2L^2+ L^2) \leq \frac{1/4}{\tau -1},
\end{equation} we have
\begin{equation}  \label{phi_vr}
\begin{aligned}
&\mathbb{E} [\|\widehat{\mathbf{\Phi}}_k\|^2 ]
\leq \left( \frac{72 \beta \tau^2}{r^2 \lambda_l\rho}  + 144\tau^3\beta^2 \right) \mathbb{E} [\| \widehat{\mathbf{d}}_k \|^2 ] \\& + 16 \tau^3\gamma^2 N  \mathbb{E} [\left\| \nabla F(\bar{x}_k)\right\|^2]  + 32 \tau^2 \gamma^2  L^2  \mathbb{E} [t_k].
\end{aligned}
\end{equation}
\end{lemma}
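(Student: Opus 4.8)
The plan is to control the inner iterates' deviation from consensus, $\mathbf{e}_k^t := \mathbf{\Phi}_k^t - \bar{\mathbf{X}}_k$, throughout the $\tau$ local-training steps of round $k$, and recover $\|\widehat{\mathbf{\Phi}}_k\|^2 = \sum_{t=0}^{\tau-1}\|\mathbf{e}_k^t\|^2$ by summation. The structural fact that makes this tractable is that $\mathbf{X}_k$ and $\mathbf{Z}_k$ are frozen during local training, so the affine term driving~\eqref{eq:local-training} is constant in $t$; written in compact form it equals $\beta r\,(r\rho\mathbf{A}^T\mathbf{A}\mathbf{X}_k - \mathbf{A}^T\mathbf{Z}_k)$, which by the change of variables~\eqref{y_tilde_y} is exactly $-(\beta\mathbf{Y}_k + \gamma\nabla F(\bar{\mathbf{X}}_k))$. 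Substituting the gradient estimator, the $+\gamma\nabla F(\bar{\mathbf{X}}_k)$ supplied by the driving term merges with $-\gamma G(\mathbf{\Phi}_k^t)$ into $-\gamma(G(\mathbf{\Phi}_k^t) - \nabla F(\bar{\mathbf{X}}_k))$, and the inner recursion reduces to
\begin{equation*}
\mathbf{e}_k^{t+1} = \mathbf{e}_k^t - \gamma\bigl(G(\mathbf{\Phi}_k^t) - \nabla F(\bar{\mathbf{X}}_k)\bigr) + \beta\mathbf{Y}_k , \qquad \mathbf{e}_k^0 = \mathbf{X}_k - \bar{\mathbf{X}}_k .
\end{equation*}

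I would then unroll this to $\mathbf{e}_k^t = \mathbf{e}_k^0 + t\beta\mathbf{Y}_k - \gamma\sum_{s=0}^{t-1}(G(\mathbf{\Phi}_k^s) - \nabla F(\bar{\mathbf{X}}_k))$ and bound $\|\mathbf{e}_k^t\|^2$ by repeated Young/Cauchy--Schwarz splitting, decomposing the constant term as $\beta\mathbf{Y}_k = \beta\bar{\mathbf{Y}}_k + \beta(\mathbf{Y}_k - \bar{\mathbf{Y}}_k)$. Each resulting group is then matched to a bound already in hand: the initial term $\|\mathbf{e}_k^0\|^2 = \|\mathbf{X}_k - \bar{\mathbf{X}}_k\|^2$ and the non-consensus part $\|\mathbf{Y}_k - \bar{\mathbf{Y}}_k\|^2$ are both dominated by $\|\widehat{\mathbf{d}}_k\|^2$ through~\eqref{X_Y_d} (which is why the condition $\beta < 2/(r^2\tau\lambda_u\rho)$ is assumed), feeding the coefficients $72\beta\tau^2/(r^2\lambda_l\rho)$ and $144\tau^3\beta^2$; the consensus part satisfies $\beta\bar{\mathbf{Y}}_k = \gamma\,\mathbf{1}\otimes\nabla F(\bar{x}_k)$ (using the invariant $r\mathbf{1}^T\mathbf{A}^T\mathbf{Z}_k = r^2\rho\mathbf{1}^T\mathbf{D}\mathbf{X}_k$), so $\|\beta\bar{\mathbf{Y}}_k\|^2 = \gamma^2 N\|\nabla F(\bar{x}_k)\|^2$ and yields the $16\tau^3\gamma^2 N$ term; finally the gradient sum splits into a smoothness piece, bounded by $L^2\|\widehat{\mathbf{\Phi}}_k\|^2$, and a stochastic-error piece, bounded in expectation by the variance estimate $\sum_i\sum_s\mathbb{E}\|g_i - \nabla f_i\|^2 \le 2L^2\|\widehat{\mathbf{\Phi}}_k\|^2 + 2L^2 t_k$.

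Summing over $t = 0,\ldots,\tau-1$ reassembles $\mathbb{E}\|\widehat{\mathbf{\Phi}}_k\|^2$ on the left, where the constant-in-$t$ groups accumulate a factor $\tau$ and the linear-in-$t$ groups accumulate the $\tau^2,\tau^3$ factors visible in the target coefficients. The decisive point is that the smoothness and stochastic-error pieces reproduce $\mathbb{E}\|\widehat{\mathbf{\Phi}}_k\|^2$ on the right with combined coefficient proportional to $\gamma^2\tau(\tau-1)(2L^2+L^2)$; condition~\eqref{eq:gamma_phi} is tuned precisely so that this self-coefficient stays below one, after which the term is moved to the left and the inequality is rescaled to give~\eqref{phi_vr}, the surviving stochastic-error contribution producing the $32\tau^2\gamma^2 L^2$ weight on $t_k$. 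The expectation is taken at the very end, so that the deterministic (smoothness and change-of-variable) estimates are used pointwise while the variance estimate, which only holds in conditional expectation, is invoked last.

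The main obstacle is the constant bookkeeping in this absorption step: one must keep the two sources of the self-referential $\mathbb{E}\|\widehat{\mathbf{\Phi}}_k\|^2$ distinct --- the $2L^2$ from the variance of the variance-reduced estimator and the $L^2$ from the smoothness of $\nabla F$, which together produce the factor $2L^2+L^2$ appearing in~\eqref{eq:gamma_phi} --- and verify that the accumulated $t$- and $t^2$-sums stay within the stated powers of $\tau$ and that the post-absorption rescaling inflates the remaining coefficients only by the intended absolute constants. The rest of the argument is routine norm manipulation.
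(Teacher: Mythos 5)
Your proposal is correct and follows essentially the same route as the paper: the paper's proof simply cites a per-step recursion bound (its Eq.~(20), borrowed from the analysis of Alghunaim's Lemma~2), sums over $t$, and invokes the consensus bounds of Lemma~1, which is exactly the unroll-decompose-absorb argument you spell out, including the identification of the constant driving term with $\beta\mathbf{Y}_k+\gamma\nabla F(\bar{\mathbf{X}}_k)$, the split $\mathbf{Y}_k=\bar{\mathbf{Y}}_k+(\mathbf{Y}_k-\bar{\mathbf{Y}}_k)$, the SAGA variance bound, and the absorption of the self-referential $\mathbb{E}\|\widehat{\mathbf{\Phi}}_k\|^2$ term via the step-size condition. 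Your account is in fact more explicit than the paper's, and the only divergences are in the loose absolute constants, which the paper itself does not track carefully.
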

\begin{proof}
Using similar analysis with \cite[Lemma 2]{alghunaim_local_2023} and  \eqref{phi_vr}, we obtain
\begin{equation} \label{eq:phi_vr_0}
\begin{aligned}
& \mathbb{E} [ \|\Phi_k^{t+1}-\bar{\mathbf{X}}_k \|^2]  \leq  \mathbb{E} [ \|\mathbf{X}_{k}-\bar{\mathbf{X}}_k \|^2 ]+ 16\tau^2\beta^2  \mathbb{E} [ \|  \mathbf{\bar{Y}_{k}} \|^2 ]
\\&+ 16\tau^2\beta^2  \mathbb{E} [ \| \mathbf{Y}_{k} - \mathbf{\bar{Y}_{k}} \|^2 ] + 32 \tau^2 \gamma^2  L^2   \mathbb{E} [t^{t}_k ]
\end{aligned}
\end{equation}
Summing  \eqref{eq:phi_vr_0} over $t$ and using \eqref{X_Y_d}   we obtain \eqref{phi_vr}.
\end{proof}

\smallskip

The following lemma provides the bound on $t_k$.
\begin{lemma} \label{lemma:t_k}
Let  Assumptions~\ref{as:local-costs} and  \ref{as:graph} hold, $\left\{t_k\right\}$ be the iterates generated by \alg. If  $\gamma$ satisfies \eqref{eq:gamma_phi}, \eqref{eq.gamma_saga_2_0} and \eqref{eq.gamma_saga_2_1},  $\beta <  \frac{2}{r^2\tau\lambda_u\rho}$, we have for all $k \in \N$:
\begin{equation}\label{t_vr}
\begin{aligned}
& \mathbb{E}[t_{k}]\leq 2(s_0+s_1 )\mathbb{E} [\| \widehat{\mathbf{d}}_k\|^2] + 2s_2 \mathbb{E} [\| \nabla F(\bar{x}_k)\|^2 ],
\end{aligned}
\end{equation}
where
\begin{align}
&  s_0 = \frac{36\beta \tau^2 m_u }{\lambda_l\rho} +\frac{144 \tau^2 m_u}{m_l}\beta^2 \nonumber
\\& s_1 = \left( \frac{72\beta \tau^2}{r^2\lambda_l\rho}  +144\tau^3\beta^2 \right) \frac{8m_u \tau}{m_l} \label{eq.s}
\\& s_2 = \frac{16N \gamma^2 m_u \tau^2}{m_l} + \frac{8m_u \tau}{m_l}16 \tau^3\gamma^2 N. \nonumber
\end{align}
\end{lemma}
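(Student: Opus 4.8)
The plan is to control the SAGA memory gap $t_k$ by tracking how the table entries $r_{i,h,k}^t$ age through a local epoch. First I would set up a one-step recursion for $\mathbb{E}[t_{i,k}^t]$. Since the batch $\mathcal{B}_i$ is drawn uniformly, each index $h$ is refreshed with probability $q_i = |\mathcal{B}_i|/m_i$, in which case $r_{i,h,k}^{t+1}=\phi_{i,k}^{t+1}$, and is otherwise left unchanged. Conditioning on the history and averaging over $h$ then yields a contraction of the form $\mathbb{E}[t_{i,k}^{t+1}] \le q_i\,\mathbb{E}[\|\phi_{i,k}^{t+1}-\bar{x}_k\|^2] + (1-q_i)\,\mathbb{E}[t_{i,k}^t]$, initialized at the reset value $t_{i,k}^0 = \|x_{i,k}-\bar{x}_k\|^2$, since all table entries coincide with $x_{i,k}$ at the start of each local training. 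Here I must be careful that the same batch drives both the memory refresh and the iterate $\phi_{i,k}^{t+1}$; this coupling is already absorbed by the variance estimate derived above for $\mathbb{E}[\|g_i(\phi_{i,k}^t)-\nabla f_i(\phi_{i,k}^t)\|^2] \le 2L^2\|\phi_{i,k}^t-\bar{x}_k\|^2 + 2L^2 t_i^k$.

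Next I would unroll this recursion and sum over $t=0,\dots,\tau-1$ and over $i$. The renewal structure means that each refreshed iterate persists in the table for on the order of $1/q_i = m_i/|\mathcal{B}_i|$ steps, so the geometric sums produce a reset contribution proportional to $\|\mathbf{X}_k-\bar{\mathbf{X}}_k\|^2$ and a trajectory contribution proportional to $\|\widehat{\mathbf{\Phi}}_k\|^2$, both carrying a factor $m_u/m_l$ with $m_u=\max_i m_i$ and $m_l=\min_i|\mathcal{B}_i|$. To expose the precise right-hand side of the claim I would feed in the one-step descent estimate \eqref{eq:phi_vr_0}, which splits each $\|\phi_{i,k}^{t+1}-\bar{x}_k\|^2$ into a consensus part $\|\mathbf{X}_k-\bar{\mathbf{X}}_k\|^2$, a $\|\mathbf{Y}_k-\bar{\mathbf{Y}}_k\|^2$ part, an averaged-gradient part $\|\bar{\mathbf{Y}}_k\|^2$ proportional to $\|\nabla F(\bar{x}_k)\|^2$, and a variance part proportional to $t_k^t$.

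Finally I would convert every consensus term $\|\mathbf{X}_k-\bar{\mathbf{X}}_k\|^2$ and $\|\mathbf{Y}_k-\bar{\mathbf{Y}}_k\|^2$ into $\|\widehat{\mathbf{d}}_k\|^2$ through the bounds \eqref{X_Y_d} of Lemma~\ref{lem:devitaion_aver} (valid under $\beta < 2/(r^2\tau\lambda_u\rho)$), and the trajectory term $\|\widehat{\mathbf{\Phi}}_k\|^2$ through \eqref{phi_vr}. Collecting the $\|\widehat{\mathbf{d}}_k\|^2$ coefficients gives $s_0+s_1$, with the reset and direct consensus contributions forming $s_0$ and the $\|\widehat{\mathbf{\Phi}}_k\|^2$ contribution scaled by the renewal factor forming $s_1$, while the $\|\nabla F(\bar{x}_k)\|^2$ coefficients give $s_2$. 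The crux is that both \eqref{phi_vr} and the expanded recursion reintroduce $t_k$ on the right-hand side with a coefficient of order $\gamma^2\tau^2 L^2$ times the renewal factor; this self-reference is the main obstacle, since it is precisely what would otherwise prevent a clean closed-form bound. I would close the argument by invoking the step-size conditions \eqref{eq:gamma_phi}, \eqref{eq.gamma_saga_2_0} and \eqref{eq.gamma_saga_2_1}, which force this coefficient below $1/2$, so that the $t_k$ term can be moved to the left and absorbed; the surviving prefactor $1/(1-\tfrac12)=2$ accounts for the leading $2$ in front of both $s_0+s_1$ and $s_2$.
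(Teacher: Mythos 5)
Your proposal follows essentially the same route as the paper: a one-step renewal recursion for the table gap $t_k^t$ initialized at the reset value, a geometric unrolling that produces the $m_u$ factors, substitution of the local-trajectory bound \eqref{phi_vr} and the consensus bounds \eqref{X_Y_d} to convert everything into $\|\widehat{\mathbf{d}}_k\|^2$ and $\|\nabla F(\bar{x}_k)\|^2$, and finally absorption of the self-referencing $t_k$ term via \eqref{eq.gamma_saga_2_1}, which yields the leading factor of $2$. The only (immaterial) differences are notational: the paper takes the per-index refresh probability as $1/m_i$ and uses $m_l=\min_i m_i$, whereas you phrase it with $q_i=|\mathcal{B}_i|/m_i$.
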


\smallskip

\begin{proof}
From Algorithm~\ref{alg:lt-saga-admm}, 
$\forall k,  r_{i, h, k}^{ t+1} = r_{i, h,k}^{k}$ with probability $1-\frac{1}{m_i}$ and $r_{i, h,k}^{t+1} =  \phi_{i,k}^{t+1}$ with probability $\frac{1}{m_i}$, therefore, 
\begin{align*}
& \mathbb{E}[t^{t+1}_k] 
 = \frac{1}{m_i} \sum_{h=1}^{m_i} \mathbb{E}[ \|\mathbf{r}_{h,k}^{t+1}-\bar{\mathbf{X}}_{k}\|^2]
\\& =\frac{1}{m_i} \sum_{h=1}^{m_i} \mathbb{E}[ (1-\frac{1}{m_i})\|\mathbf{r}_{h,k}^{t}-\bar{\mathbf{X}}_{k} \|^2  +\frac{1}{m_i}\|\Phi_{k}^{t+1}-\bar{\mathbf{X}}_{k} \|^2]
\\
& = \left(1-\frac{1}{m_i}\right) \frac{1}{m_i} \sum_{h=1}^{m_i}\mathbb{E}\left[ \|\mathbf{r}_{h,k}^{t}-\bar{\mathbf{X}}_{k} \|^2 \right] \\ &+\frac{1}{m_i} \mathbb{E} [\|\Phi_{k}^{t+1}-\bar{\mathbf{X}}_{k} \|^2].
\end{align*}
Denote
$q^t_k = \beta \mathbf{Y}_{k} - \gamma( G(\mathrm{\Phi}_k^t) -  \nabla \mathrm{F}(\bar{\mathbf{X}}_k) ),$
we have
$\left\|\mathrm{\Phi}_{k}^{t+1}-\bar{X}_{k}\right\|^2 = \left\| \mathrm{\Phi}_{k}^{t+1}-  \mathrm{\Phi}_{k}^{t} + \mathrm{\Phi}_{k}^{t} -\bar{X}_{k}\right\|^2 
\leq 2 \| \Phi_{k}^{t} -\bar{X}_{k} \|^2  + 2\| q^t_k\|^2,
$ and
\begin{align*}
&\mathbb{E} [\| q^t_k \|^2] 
\\& \leq 2 \gamma^2  \mathbb{E} [ \|    G(\mathrm{\Phi}_k^t) -  \nabla \mathrm{F}(\bar{\mathbf{X}}_k) \|^2 ] + 2\beta^2   \mathbb{E} [ \left\| \mathbf{Y}_{k}  \right\|^2 ] \nonumber  
\\& \leq4 \gamma^2( 2L^2+ L^2  )  \mathbb{E} [ \left\|\mathrm{\Phi}_{k}^t-\bar{\mathbf{X}}_k\right\|^2 ] +2 \beta^2   \mathbb{E} [ \left\|  \mathbf{Y}_{k}  \right\|^2 ]
\\& + 4\gamma^2 
( 2  L^2  \mathbb{E} [ t^{t}_k ] )
\\& \leq 12 \gamma^2 L^2 \left\|\mathrm{\Phi}_{k}^t-\bar{\mathbf{X}}_k\right\|^2 + 4\gamma^2 N \| \nabla {F}(\bar{{x}}_k)\|^2 
\\&+ 4\beta^2 \|  \mathbf{Y}_{k}  -  \mathbf{\bar{Y}}_{k}  \|^2 + 8 \gamma^2 L^2  \mathbb{E} [ t^{t}_k ],
\end{align*}
it follows that
\begin{align}
& \mathbb{E}[t^{t+1}_k] 
= (1-\frac{1}{m_i}) \frac{1}{m_i} \sum_{h=1}^{m_i}\|\mathbf{r}_{h,k}^{t}-\bar{X}_{k}\|^2 
\\&+\frac{1}{m_i}\|\mathrm{\Phi}_{k}^{t+1}-\bar{X}_{k}\|^2 \nonumber
\\& \leq (1-\frac{1}{m_i})t^{t}_k +\frac{1}{m_i} 
\left(2 \| \mathrm{\Phi}_{k}^{t} -\bar{X}_{k} \|^2  + 2\| q^t_k\|^2\right) \nonumber
\\& \leq \left(1-\frac{1}{m_u} + \frac{16 \gamma^2  L^2}{m_l}  \right)\mathbb{E}[t^{t}_k] 
\\&+ \left(  \frac{2}{m_l} +  \frac{24\gamma^2L^2}{m_l}  \right)\mathbb{E} [  \|\mathrm{\Phi}_{k}^t-\bar{\mathbf{X}}_k\|^2  \nonumber
\\& + \frac{72}{m_l}\beta^2\| \widehat{\mathbf{d}}_k \|^2 + \frac{8N}{m_l}\gamma^2\| \nabla F(\bar{x}_k) \|^2 \nonumber
\\& \leq \left(1-\frac{1}{2m_u}  \right)\mathbb{E}[t^{t}_k] + \frac{4}{m_l} \mathbb{E} [  \|\mathrm{\Phi}_{k}^t-\bar{\mathbf{X}}_k\|^2 
\\&+ \frac{72}{m_l}\beta^2\| \widehat{\mathbf{d}}_k \|^2 + \frac{8N}{m_l}\gamma^2\| \nabla F(\bar{x}_k) \|^2 \label{t_k_update}
\end{align}
where the last inequality holds when 
\begin{equation} \label{eq.gamma_saga_2_0}
\frac{16 \gamma^2  L^2}{m_l} < \frac{1}{2m_u},
    \quad
   {24\gamma^2L^2}< 2. 
\end{equation}
Iterating \eqref{t_k_update} for $t=0,...,\tau-1$ then yields:
\begin{align*}
& \mathbb{E} [  t^{t}_k ]
\leq(1-\frac{1}{2m_u})^t \mathbb{E} [ \|\mathbf{X}_{k}-\bar{\mathbf{X}}_k \|^2]
\\& +\frac{72}{m_l}\beta^2 \sum_{l=0}^{t-1}  (1-\frac{1}{2m_u})^{t-1-l} \mathbb{E}[ \| \widehat{\mathbf{d}}_k \|^2] 
\\&+ \frac{8N \gamma^2}{m_l} \sum_{l=0}^{t-1}  (1-\frac{1}{2m_u})^l  \| \nabla F(\bar{x}_k) \|^2
 \\&+  \frac{4}{m_l}\sum_{l=0}^{t-1}  (1-\frac{1}{2m_u})^{t-1-l} \mathbb{E} [  \|\Phi_{k}^{l}-\bar{\mathbf{X}}_k\|^2
 \\&\leq \frac{36\beta \tau m_u }{\lambda_l\rho} \mathbb{E} [\| \widehat{\mathbf{d}}_k\|^2]
+ \frac{16N \gamma^2 m_u \tau}{m_l}  \| \nabla F(\bar{x}_k) \|^2
 \\&+  \frac{8m_u \tau}{m_l}\mathbb{E} [  \| \mathrm{\Phi}_{k}^{l}-\bar{\mathbf{X}}_k\|^2
 +\frac{144 m_u \tau}{m_l}\beta^2 \mathbb{E}[ \| \widehat{\mathbf{d}}_k \|^2].
\end{align*}
Summing the above relation over $t = 0, 1,..., \tau -1$ we get: 
\begin{align*}
&\mathbb{E} [  t_{k} ] \\&  \leq 
 \left( \frac{36\beta \tau^2 m_u }{\lambda_l\rho} +\frac{144 \tau^2 m_u}{m_l}\beta^2\right)
\mathbb{E} [\| \widehat{\mathbf{d}}_k\|^2] + \frac{8m_u \tau}{m_l} \|\widehat{\mathbf{\Phi}}_k \|^2 
\\&+ \frac{16N \gamma^2 m_u \tau^2}{m_l} \| \nabla F(\bar{x}_k) \|^2,
\end{align*}
and using \eqref{phi_vr} then yields
\begin{align*}
\mathbb{E} [  t_{k} ] &\leq (s_0+s_1 )\mathbb{E} [\| \widehat{\mathbf{d}}_k\|^2] + s_2 \mathbb{E} [\| \nabla F(\bar{x}_k)\|^2 ] \\&+  \frac{8m_u \tau}{m_l} 32 \tau^2 \gamma^2  L^2 \mathbb{E} [t_k],
\end{align*}
where $s_0$, $s_1$ and $s_2$ are defined in \eqref{eq.s}.
Letting 
\begin{equation}  \label{eq.gamma_saga_2_1}
 \frac{8m_u \tau}{m_l} 32 \tau^2 \gamma^2  L^2  < \frac{1}{2}, 
\end{equation}
and thus \eqref{t_vr} holds.
\end{proof}
 
% \subsection{Compression error}
\begin{lemma}
Let Assumptions~\ref{as:local-costs},  \ref{as:graph}, \ref{as:compressor} and \ref{as:independent compressor} hold, set $r^2\lambda_u\rho \tau \beta =1$,  $\gamma$ satisfies the conditions in Lemma~\ref{lemma:t_k},
%$\gamma \leq \min \lbrace \bar{\gamma}_1, \bar{\gamma}_2 \rbrace$,
it holds that
\begin{equation} \label{eq:d_k}
\begin{aligned}
&\mathbb{E} [ \|\widehat{\mathbf{d}}_{k+1}\|^2 ] 
\leq 
  ( \delta + \frac{ q_0\| \widehat{\mathbf{V}}^{-1} \|^2}{1-\delta}) \mathbb{E} [ \|\widehat{\mathbf{d}}_{k}\|^2 ]
 \\&+ \frac{q_1\| \widehat{\mathbf{V}}^{-1} \|^2}{1-\delta} \mathbb{E} [  \| \bar{x}_k - x^* \|^2 ] 
 \\&+ \frac{a_5\| \widehat{\mathbf{V}}^{-1} \|^2}{1-\delta}  \mathbb{E}[\|  \mathbf{X}_{k} - \mathbf{U}_{k}  \|^2   ]\\&+\frac{a_6\| \widehat{\mathbf{V}}^{-1} \|^2}{1-\delta}   \mathbb{E}[\| (\mathbf{I} + \mathbf{P})( \mathbf{Z}_{k} - \mathbf{S}_{k}  )\|^2   ]
\end{aligned} 
\end{equation}
where 
\begin{equation}
\begin{aligned}
    &a_0 = \gamma^2 \bigg( ( 1+ 3\rho^2 \|\Tilde{\mathbf{L}} \|^2)  6\tau L^2  + 54 L^4 \frac{\gamma^2 \tau }{\beta^2}
    \\& \quad + 72 r^4 (p-1) \lambda_u^2\frac{\tau  L^2}{\eta}   \bigg)
    \\& a_1 = \gamma^2 \bigg( ( 1+ 3\rho^2 \|\Tilde{\mathbf{L}} \|^2)  4\tau L^2 + 36 L^4 \frac{\gamma^2}{\beta^2} \tau 
    \\& \quad + 48 r^4 (p-1) \lambda_u^2\frac{\tau  L^2}{\eta}   \bigg)
    \\& a_3 = \gamma^2\bigg( 18 L^2 \frac{\gamma^2}{\beta^2}N \tau^2+ 24 r^4 (p-1) \lambda_u^2 \tau^2  N \bigg)
    % \\& a_4 = r^4 (p-1) \lambda_u^2\frac{24}{9K^2\eta}  \\& \quad + \frac{15r^2d_u}{4} (p- 1)  \rho^2 r^2 \| (\mathbf{I} - \mathbf{P} ) \mathbf{A}  \|^2 \frac{2}{r^2\lambda_l K\rho} 
        \\& a_4 = \frac{8 (p-1)}{3\rho^2\eta}  + \frac{15d_u (p- 1) }{ \lambda_l \rho} 
\\& a_5 = 6 r^4 (p-1) \lambda_u^2 (1 - \eta + \eta^2(p - 1)) 
    \\&+  \frac{3r^2d_u}{4} \frac{5}{2}(p- 1)   8 \rho^2 r^2 2\lambda_u^2  (p -1) 
% \\& a_5 = \frac{6K^2}{\rho^2} (p-1) (1 - \eta + \eta^2(p - 1)) 
% \\& \quad +  30 \frac{K^2}{\lambda_u} d_u(p- 1)(p -1) 
    \\& a_6 =  \frac{3 r^2 d_u}{8}(p- 1) 
\\& q_0 = ( \frac{8 \tau \lambda_u}{ \lambda_l K^2}  +  \frac{16\tau}{9 K^2})a_0 +  64 \tau^2 \gamma^2  L^2 a_0 (s_0+s_1 )
\\& \quad + 2a_1(s_0+s_1 )+a_4
\\& q_1 = 16 \tau^3\gamma^2 N  L^2 a_0 +  64 \tau^2 \gamma^2  L^4 a_0  s_2 + 2a_1 s_2 L^2 +a_3,
\\& \delta =1- \frac{\lambda_l}{2\lambda_u}, \quad K=r^2 \rho \lambda_u
\end{aligned}    
\end{equation}
\end{lemma}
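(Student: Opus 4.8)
The plan is to propagate the linear error recursion \eqref{eq:delta-hat}, $\widehat{\mathbf{d}}_{k+1}=\mathbf{\Delta}\widehat{\mathbf{d}}_k-\widehat{\mathbf{h}}_k$, supplied by Lemma~\ref{lem:devitaion_aver}, and feed into it the forcing-term bounds already established. First I would take squared Euclidean norms, take expectations, and apply Young's inequality $\|\mathbf{\Delta}\widehat{\mathbf{d}}_k-\widehat{\mathbf{h}}_k\|^2\le(1+\epsilon)\|\mathbf{\Delta}\|^2\|\widehat{\mathbf{d}}_k\|^2+(1+\epsilon^{-1})\|\widehat{\mathbf{h}}_k\|^2$ with the split $1+\epsilon=1/\delta$ tied to $\delta=\|\mathbf{\Delta}\|$. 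Since $r^2\lambda_u\rho\tau\beta=1$ forces $\delta=1-\lambda_l/(2\lambda_u)$, this collapses to $\mathbb{E}\|\widehat{\mathbf{d}}_{k+1}\|^2\le\delta\,\mathbb{E}\|\widehat{\mathbf{d}}_k\|^2+(1-\delta)^{-1}\mathbb{E}\|\widehat{\mathbf{h}}_k\|^2$, which already exhibits the common prefactor $(1-\delta)^{-1}$ appearing on every forcing contribution in \eqref{eq:d_k}.

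Next I would bound $\|\widehat{\mathbf{h}}_k\|^2\le\|\widehat{\mathbf{V}}^{-1}\|^2\|\mathbf{h}_k\|^2$ using $\widehat{\mathbf{h}}_k=\widehat{\mathbf{V}}^{-1}\widehat{\mathbf{Q}}^T\mathbf{h}_k$ together with $\|\widehat{\mathbf{Q}}^T\|=1$ (from $\widehat{\mathbf{Q}}^T\widehat{\mathbf{Q}}=\mathbf{I}$), and then invoke the pathwise estimate \eqref{eq:h_k_0} for $\|\mathbf{h}_k\|^2$. Into \eqref{eq:h_k_0} I would substitute, in expectation, the SAGA variance bound $\mathbb{E}\sum_{i,t}\|g_i(\phi_{i,k}^t)-\nabla f_i(\phi_{i,k}^t)\|^2\le 2L^2\|\widehat{\mathbf{\Phi}}_k\|^2+2L^2 t_k$, the consensus-deviation bound \eqref{phi_vr} for $\mathbb{E}\|\widehat{\mathbf{\Phi}}_k\|^2$, and the memory-gap bound \eqref{t_vr} for $\mathbb{E}[t_k]$. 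Collecting terms and eliminating $\beta$ through $r^2\lambda_u\rho\tau\beta=1$ (which turns the $\beta\tau^2/(r^2\lambda_l\rho)$ and $\tau^3\beta^2$ factors of \eqref{phi_vr} into the $\lambda_u/(\lambda_l K^2)$ and $1/K^2$ shapes with $K=r^2\rho\lambda_u$) yields exactly the $\|\widehat{\mathbf{d}}_k\|^2$-coefficient $q_0$ and a residual $\|\nabla F(\bar x_k)\|^2$-coefficient; the latter I would convert to $\mathbb{E}\|\bar x_k-x^*\|^2$ via $L$-smoothness of the aggregate cost and $\sum_i\nabla f_i(x^*)=0$, producing $q_1$.

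The remaining and most delicate task is the two compression terms of \eqref{eq:h_k_0}. For the error-feedback residual I would insert $\mathbf{U}_{k+1}=\mathbf{U}_k+\eta\mathcal{C}(\mathbf{X}_k-\mathbf{U}_k)$ and the increment $\mathbf{X}_{k+1}-\mathbf{X}_k=\beta\tau\mathbf{Y}_k-\gamma\sum_t(G(\mathrm{\Phi}_k^t)-\nabla F(\bar{\mathbf{X}}_k))$ read from the first row of \eqref{eq:compact-admm-2}, splitting $\mathbf{X}_{k+1}-\mathbf{U}_{k+1}=(\mathbf{X}_{k+1}-\mathbf{X}_k)+\bigl((\mathbf{X}_k-\mathbf{U}_k)-\eta\mathcal{C}(\mathbf{X}_k-\mathbf{U}_k)\bigr)$; the compressor expectation (Assumptions~\ref{as:compressor} and~\ref{as:independent compressor}, with unbiasedness killing the cross term) gives a contraction factor $(1-\eta)^2+\eta^2(p-1)\le 1-\eta+\eta^2(p-1)$ on $\|\mathbf{X}_k-\mathbf{U}_k\|^2$ — the first part of $a_5$ — while the increment reabsorbs into the $\widehat{\mathbf{d}}_k$ budget through \eqref{X_Y_d} and into $\|\bar x_k-x^*\|^2$, contributing $a_4$ to $q_0$. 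For $\|(\mathbf{I}-\mathbf{P})(\mathbf{Z}_k-\hat{\mathbf{Z}}_k)\|^2$ I would write $\mathbf{Z}_k-\hat{\mathbf{Z}}_k=(\mathbf{I}-\mathcal{C})(\mathbf{Z}_k-\mathbf{S}_k)$, bound its compressor second moment by $(p-1)\|\mathbf{Z}_k-\mathbf{S}_k\|^2$, and exploit the $\mathbf{Z}$--$\mathbf{X}$ coupling \eqref{eq:compact-admm-z} with $\mathbf{P}^2=\mathbf{I}$, $\|\Tilde{\mathbf{L}}\|\le r^2\lambda_u$ and $\|\mathbf{A}^T\mathbf{A}\|\le d_u$ to re-express the result through $\|(\mathbf{I}+\mathbf{P})(\mathbf{Z}_k-\mathbf{S}_k)\|^2$ (giving $a_6$) and a further $\|\mathbf{X}_k-\mathbf{U}_k\|^2$ contribution (the second part of $a_5$). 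I expect this last step to be the crux: the bookkeeping that shifts the index-$(k{+}1)$ feedback residuals back to index $k$, decides which increments land in $q_0$ versus $q_1$, and invokes compressor independence precisely where needed to discard the stochastic-gradient/compression cross terms. A secondary check is that the restrictions \eqref{eq:gamma_phi}, \eqref{eq.gamma_saga_2_0}, \eqref{eq.gamma_saga_2_1} and $\beta<2/(r^2\tau\lambda_u\rho)$ are jointly feasible, so that \eqref{phi_vr} and \eqref{t_vr} may be applied simultaneously.
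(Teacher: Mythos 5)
Your proposal matches the paper's proof essentially step for step: the same Young-type split applied to \eqref{eq:delta-hat} giving $\mathbb{E}\|\widehat{\mathbf{d}}_{k+1}\|^2\le\delta\,\mathbb{E}\|\widehat{\mathbf{d}}_k\|^2+(1-\delta)^{-1}\mathbb{E}\|\widehat{\mathbf{h}}_k\|^2$, the bound $\|\widehat{\mathbf{h}}_k\|^2\le\|\widehat{\mathbf{V}}^{-1}\|^2\|\mathbf{h}_k\|^2$, substitution of the SAGA variance bound together with \eqref{phi_vr} and \eqref{t_vr} into \eqref{eq:h_k_0} under $r^2\lambda_u\rho\tau\beta=1$, the $\eta$-contraction \eqref{eq:compress_x} shifting the index-$(k{+}1)$ residual $\|\mathbf{X}_{k+1}-\mathbf{U}_{k+1}\|^2$ back to index $k$, and the $(\mathbf{I}\pm\mathbf{P})$ decomposition with the closed-form identity $(\mathbf{I}-\mathbf{P})(\mathbf{Z}_k-\mathbf{S}_k)=\rho r\bigl(2(\mathbf{I}-\mathbf{P})\mathbf{A}(\mathbf{X}_k-\hat{\mathbf{X}}_k)-(\mathbf{I}-\mathbf{P})\mathbf{A}(\mathbf{X}_k-\bar{\mathbf{X}}_k)\bigr)$ that you correctly identify as the crux. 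The only cosmetic difference is that the paper gets its constants by splitting the $\mathbf{Z}$-compression error into an aligned part plus the commutator $\mathcal{C}((\mathbf{I}-\mathbf{P})(\mathbf{Z}_k-\mathbf{S}_k))-(\mathbf{I}-\mathbf{P})\mathcal{C}(\mathbf{Z}_k-\mathbf{S}_k)$, bounded via Assumption~\ref{as:independent compressor}, whereas your coarser bound through $\|\mathbf{Z}_k-\mathbf{S}_k\|^2$ followed by the orthogonal $(\mathbf{I}\pm\mathbf{P})$ split yields the same structure with slightly different absolute constants.
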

\begin{proof}
According to Lemma~\ref{lem:devitaion_aver},  $\delta = \|\mathbf{\Delta}\|= 1 - \frac{r^2\lambda_l\rho \tau \beta}{2} =1- \frac{\lambda_l}{2\lambda_u}$,  $(1-\delta)^2 = \frac{\lambda_l^2}{4\lambda_u^2}$.
% \begin{equation} \label{gamma_SGD_1}
% \gamma \tau \leq \frac{2}{\lambda_l\rho},  \quad \lambda_u\rho \tau \gamma <2, 
% \quad 4 \gamma^2 \tau( 2L^2+ L^2  ) \leq \frac{1/4}{\tau -1}. 
% \end{equation}
We consider the compression error, using Assumptions~\ref{as:compressor} and \ref{as:independent compressor}, 
\begin{equation}
\begin{aligned}
&\mathbb{E}[ \| (\mathbf{I} - \mathbf{P}) ( \mathbf{Z}_{k} - \hat{\mathbf{Z}}_{k}) \|^2 ]
\\& = \mathbb{E}[ \| (\mathbf{I} - \mathbf{P})  ( \mathbf{Z}_{k} - \mathbf{S}_{k} -  \mathcal{C} ( \mathbf{Z}_{k} - \mathbf{S}_{k} ) )\|^2 ] 
\\& = \mathbb{E}[ \| (\mathbf{I} - \mathbf{P})  (\mathbf{Z}_{k} - \mathbf{S}_{k})   -    \mathcal{C} ( (\mathbf{I} - \mathbf{P})(Z_k - S_k ) )\|^2   \\&+  \|   \mathcal{C} (  (\mathbf{I} - \mathbf{P}) (\mathbf{Z}_{k} - \mathbf{S}_{k} ) )   -  (\mathbf{I} - \mathbf{P})  \mathcal{C} ( \mathbf{Z}_{k} - \mathbf{S}_{k} )  \|^2 ]
\\& \leq (p -1) \|  (\mathbf{I} - \mathbf{P})  (\mathbf{Z}_{k} - \mathbf{S}_{k})\|^2  \\&+ \mathbb{E}[  \|   \mathcal{C} (  (\mathbf{I} - \mathbf{P}) (\mathbf{Z}_{k} - \mathbf{S}_{k} ) )   -  (\mathbf{I} - \mathbf{P})  \mathcal{C} ( \mathbf{Z}_{k} - \mathbf{S}_{k} )  \|^2 ],
\end{aligned}
\end{equation}
by the definition of the permutation matrix $\mathbf{P}$, $\mathbf{P}\mathcal{C} ( \mathbf{Z}_{k} - \mathbf{S}_{k} )= \mathcal{C} ( \mathbf{P} (\mathbf{Z}_{k} - \mathbf{S}_{k}) ) $. Since $\forall z_1, z_2$, under Assumptions~\ref{as:compressor} and \ref{as:independent compressor} we have
\begin{equation}
\begin{aligned}
&\mathbb{E}[ \| \mathcal{C}(z_1-z_2) - ( \mathcal{C}(z_1) -   \mathcal{C}(z_2) ) \|^2 ]
\\& = \mathbb{E}[ \| \mathcal{C}(z_1-z_2) \|^2 + \|  \mathcal{C}(z_1) - \mathcal{C}(z_2) \|^2  
\\&- 2 \mathcal{C}(z_1-z_2) (\mathcal{C}(z_1) - \mathcal{C}(z_2) ) ]
\\& \leq p (z_1 - z_2)^2 + p(z_1^2 + z_2^2) - 2 z_1z_2 - 2(z_1 -z_2)^2
\\& =  \frac{3}{2} (p- 1) (z_1-z_2)^2 + \frac{p - 1}{2}(z_1 + z_2)^2,
\end{aligned}
\end{equation}
we obtain that
\begin{equation*}
\begin{aligned}
&\mathbb{E}[ \| \mathcal{C} (  (\mathbf{I} - \mathbf{P})(\mathbf{Z}_{k} - \mathbf{S}_{k} ) )   -   (\mathbf{I} - \mathbf{P})  \mathcal{C} ( \mathbf{Z}_{k} - \mathbf{S}_{k})  \|^2 ]
\\& \leq \frac{3}{2} (p- 1) \|  (\mathbf{I} - \mathbf{P})(\mathbf{Z}_{k} - \mathbf{S}_{k} ) \|^2 
\\&+ \frac{p-1}{2} \|  (\mathbf{I} + \mathbf{P})(\mathbf{Z}_{k} - \mathbf{S}_{k}) \|^2. 
\end{aligned}
\end{equation*}
% and 
% \begin{equation}
% \begin{aligned}
% &\mathbb{E}[\|(\mathbf{I}-\mathbf{P})(\mathbf{Z}_{k}-\hat{\mathbf{Z}}_{k})\|^2]
% \leq \frac{5}{2}(p-1)\|(\mathbf{I} - \mathbf{P})(\mathbf{Z}_{k} - \mathbf{S}_{k})\|^2 \\&  + \frac{1}{2}(p- 1) \|(\mathbf{I} + \mathbf{P})(\mathbf{Z}_{k}-\mathbf{S}_{k})\|^2.
% \end{aligned}
% \end{equation}
According to \eqref{eq:compact-compress} and \eqref{eq:compact-admm-z},
\begin{equation}
\begin{aligned}
&(\mathbf{I} - \mathbf{P} )( \mathbf{Z}_{k} - \mathbf{S}_{k}) = (\mathbf{I} - \mathbf{P} )( \rho r \mathbf{A}\mathbf{X}_{k} - \rho r(\mathbf{I} - \mathbf{P}) \mathbf{A} \hat{\mathbf{X}}_{k})
\\& = \rho r ( 2(\mathbf{I} - \mathbf{P} )\mathbf{A} ( \mathbf{X}_{k} - \hat{\mathbf{X}}_{k} )  - (\mathbf{I} - \mathbf{P} )\mathbf{A} ( \mathbf{X}_{k} - \bar{\mathbf{X}}_{k} ) ),
\end{aligned}
\end{equation}
therefore,
\begin{equation}
\begin{aligned}
&\mathbb{E}[ \| (\mathbf{I} - \mathbf{P}) ( \mathbf{Z}_{k} - \hat{\mathbf{Z}}_{k}) \|^2 ] 
\\& \leq  \frac{5}{2}(p- 1) (  8 \rho^2 r^2 \| (\mathbf{I} - \mathbf{P} ) \mathbf{A}  \|^2 (p-1) \|  \mathbf{X}_{k} -  {\mathbf{U}}_{k}  \|^2 \\&+ 2 \rho^2 r^2 \| (\mathbf{I} - \mathbf{P} ) \mathbf{A}  \|^2 \|  \mathbf{X}_{k} - \bar{\mathbf{X}}_k \|^2  )  
\\&+ \frac{1}{2}(p- 1)  \|  (\mathbf{I} + \mathbf{P})(\mathbf{Z}_{k} - \mathbf{S}_{k} ) \|^2.
\end{aligned} 
\end{equation}
Similarly,  we can derive that 
\begin{equation}
\begin{aligned}
&\mathbb{E}[   \| (\mathbf{I} +\mathbf{P}) ( \mathbf{Z}_k - \mathbf{\hat{Z}}_k )\|^2] 
\leq\frac{5}{2}(p- 1)  \| (\mathbf{I} + \mathbf{P}) (\mathbf{Z}_{k} - \mathbf{S}_{k} )   \|^2 \\& + \frac{p-1}{2} \|  (\mathbf{I}- \mathbf{P})(\mathbf{Z}_{k} - \mathbf{S}_{k} ) \|^2.
\end{aligned}
\end{equation}
From \eqref{eq:compact-admm}, we can also derive that
\begin{equation}
\begin{aligned}
&(\mathbf{I} + \mathbf{P})  (\mathbf{Z}_{k+1} - \mathbf{S}_{k+1})
\\& = \rho r (\mathbf{I} +\mathbf{P}) \mathbf{A} ( \mathbf{X}_{k+1} - \mathbf{X}_k ) + (\mathbf{I} +\mathbf{P}) ( \mathbf{Z}_k - \mathbf{\hat{Z}}_k )
\end{aligned}
\end{equation}
From \eqref{eq:compact-admm-x}, it can be obtained that
\begin{equation}
    \begin{aligned}
&\|  \mathbf{X}_{k+1} - \mathbf{X}_k  \|^2   = \|  \sum_{t=0}^{\tau-1}q^t_k  \|^2 \leq \tau \sum_{t=0}^{\tau-1}\| q^t_k  \|^2
\\& \leq  \tau \sum_{t=0}^{\tau-1} ( 12 \gamma^2 L^2 \left\|\mathbf{\Phi}_{k}^t-\bar{\mathbf{X}}_k\right\|^2 + 4\gamma^2 N \| \nabla {F}(\bar{{x}}_k)\|^2 
\\&+ 4\beta^2 \|  \mathbf{Y}_{k}  -  \mathbf{\bar{Y}}_{k}  \|^2 + 8 \gamma^2 L^2  \mathbb{E} [ t^{t}_k ])
\\& \leq 12 \tau \gamma^2 L^2 \|\widehat{\mathbf{\Phi}}_k\|^2 + 4\tau^2 \gamma^2 N \| \nabla {F}(\bar{{x}}_k)\|^2 
\\&+ 36 \tau^2 \beta^2 \| \widehat{\mathbf{d}}_k\|^2 + 8 \tau \gamma^2 L^2  \mathbb{E} [ t_k ],
    \end{aligned}
\end{equation}
so
\begin{equation} \label{eq:compress_z}
\begin{aligned}
  & \mathbb{E} [ \| (\mathbf{I} + \mathbf{P})  (\mathbf{Z}_{k+1} - \mathbf{S}_{k+1})\|^2 ]
\\& = \mathbb{E} [ \| \rho r (\mathbf{I} +\mathbf{P}) \mathbf{A} ( \mathbf{X}_{k+1} - \mathbf{X}_k ) \|^2] \\ &+ \mathbb{E} [ \|(\mathbf{I} +\mathbf{P}) ( \mathbf{Z}_k - \mathbf{\hat{Z}}_k ) \|^2]
\\& \leq \frac{5(p-1)}{2} \mathbb{E} [\| (\mathbf{I} + \mathbf{P}) (\mathbf{Z}_{k} - \mathbf{S}_{k} )   \|^2 ]
\\&+   \rho^2 r^2 \| (\mathbf{I} +\mathbf{P}) \mathbf{A}  \|^2 ( 
12 \tau \gamma^2 L^2 \|\widehat{\mathbf{\Phi}}_k\|^2 \\&+ 4\tau^2 \gamma^2 N \| \nabla {F}(\bar{{x}}_k)\|^2 
+ 36 \tau^2 \beta^2 \| \widehat{\mathbf{d}}_k\|^2 + 8 \tau \gamma^2 L^2  \mathbb{E} [ t_k ]
  ) \\&+ \frac{p-1}{2} (   8 \rho^2 r^2\| (\mathbf{I} - \mathbf{P} ) \mathbf{A}  \|^2 (p - 1) \|  \mathbf{X}_{k} -  {\mathbf{U}}_{k}  \|^2 
  \\&+  2 \rho^2 r^2\| (\mathbf{I} - \mathbf{P} ) \mathbf{A}  \|^2 \|  \mathbf{X}_{k} - \bar{\mathbf{X}}_k \|^2  ).
\end{aligned}
\end{equation}
According to \eqref{eq:compact-compress} and \eqref{eq:compact-admm-x},
\begin{equation} \label{eq:compress_x}
\begin{aligned}
&\mathbb{E}[\|  \mathbf{X}_{k+1} - \mathbf{U}_{k+1}  \|^2   ]
\\& = \mathbb{E} [ \|  \mathbf{X}_{k+1} - \mathbf{X}_{k} +  (1-\eta)( \mathbf{X}_{k} - \mathbf{U}_{k} ) + \eta (\mathbf{X}_{k} - \hat{\mathbf{X}}_{k} )   \|^2 ]
\\& =  \mathbb{E} [ \|  \mathbf{X}_{k+1} - \mathbf{X}_{k}  +  (1-\eta)( \mathbf{X}_{k} - \mathbf{U}_{k} ) \|^2 + \\& \eta^2\|\mathbf{X}_{k} - \hat{\mathbf{X}}_{k}    \|^2 ]
\\& \leq \frac{1}{\eta} \| \mathbf{X}_{k+1} - \mathbf{X}_{k}  \|^2 
 +  (1- \eta + \eta^2 ( p-1 ) ) \mathbb{E} [\|  \mathbf{X}_{k} - \mathbf{U}_{k}\|^2]
 \\& \leq (1 - \eta + \eta^2(p - 1))
\mathbb{E} [\| \mathbf{X}_{k} - \mathbf{U}_{k} \|^2] \\ &+ \frac{1}{\eta}( 12 \tau \gamma^2 L^2 \|\widehat{\mathbf{\Phi}}_k\|^2 \\&+ 4\tau^2 \gamma^2 N \| \nabla {F}(\bar{{x}}_k)\|^2 
+ 36 \tau^2 \beta^2 \| \widehat{\mathbf{d}}_k\|^2 + 8 \tau \gamma^2 L^2  \mathbb{E} [ t_k ]).
\end{aligned}
\end{equation}
Using  $r^2\lambda_u\rho \tau \beta =1$,  $r^2 \rho \lambda_u = K$ and \eqref{eq:h_k_0}, we can obtain that
\begin{equation}
\begin{aligned}
& \|{\mathbf{h}}_{k}\|^2  
 \leq  q_0 \| \widehat{\mathbf{d}}_k \|^2  +q_1 \| \bar{x}_k - x^* \|^2 \\& + a_6 \|  (\mathbf{I} + \mathbf{P})(\mathbf{Z}_{k} - \mathbf{S}_{k}  ) \|^2 
+ a_5 \| \mathbf{X}_{k} - \mathbf{U}_{k} \|^2.
\end{aligned}
\end{equation}
Recalling \eqref{eq:delta-hat},  $ 
\|\widehat{\mathbf{d}}_{k+1}\|^2  \leq  \frac{1}{\|\mathbf{\Delta}\|}\|  \mathbf{\Delta}\|^2  \|\widehat{\mathbf{d}}_{k}\|^2 +
\frac{1}{1-\|\mathbf{\Delta}\|} \|\mathbf{\widehat{h}}_{k}\|^2 
 $, and $\|\mathbf{\widehat{h}}_{k}\|^2 \leq  \| \widehat{\mathbf{V}}^{-1} \|^2 \|\mathbf{h}_{k}\|^2 $ yields \eqref{eq:d_k}.
\end{proof}

\subsection{Deviation of $\bar{x}_{k}$ from the optimal solution $x^*$}
From \eqref{eq:compact-admm-x}, we can derive that 
\begin{equation}
\begin{aligned}
&\mathrm{E} \|\bar{x}_{k+1} - x^*\|^2  \leq \mathrm{E} \| \bar{x}_{k}  - x^* \|^2 + \gamma^2\mathrm{E}\| \sum_{t=0}^{\tau -1} \overline{G}(\mathrm{\Phi}_k^t) \|^2
\\&- 2\frac{\gamma}{N} \langle  \bar{x}_{k}  - x^*,   \sum_{t=0}^{\tau -1} \sum_{i=1}^{N}\nabla f_i(\phi_{i, k}^t) \rangle
\end{aligned}
\end{equation}
Since $\langle(z-y), \nabla g(x)\rangle \geq g(z)-g(y)+\frac{\mu}{4}\|y-z\|^2-L\|z-x\|^2, \forall x, y, z \in R^n$
for any L-smooth and $\mu$-strongly convex function $g$ \cite{nesterov2013introductory}, we have
\begin{equation}
\begin{aligned}
&-\frac{2 \gamma}{N} \sum_i \sum_t\left\langle\left(\bar{x}_k-x^{\star}\right), \nabla f_i\left(\phi_{i, k}^t\right)\right\rangle \\& 
\leq \frac{2 \gamma}{N} \sum_i \sum_t(f_i\left(x^{\star}\right)-f_i\left(\bar{x}_k\right)-\frac{\mu}{4}\left\|\bar{x}_k-x^{\star}\right\|^2
\\&+L\left\|\bar{x}_{k}-\phi_{i, k}^t\right\|^2) 
\\& = -{2\gamma\tau} (  F\left(\bar{x}_{k}\right)-F\left(x^{\star}\right)+\frac{\mu}{4}\left\|\bar{x}_k-x^{\star}\right\|^2  ) +\frac{2 \gamma L}{N}\|\widehat{\mathbf{\Phi}}^k \|^2,
\end{aligned}
\end{equation}
it follows that
\begin{equation}\label{eq:x_star}
\begin{aligned}
& \mathrm{E} [\|\bar{x}^{k+1}-x^{\star}\|^2] \leq 
\left(1-\frac{\mu \tau \gamma}{2}  \right)  \mathbb{E} \left\|\bar{x}^k-x^{\star}\right\|^2 \\&+\frac{2 \gamma L}{N}     \mathbb{E} \|\widehat{\mathrm{\Phi}}^k\|^2
 - 2 \gamma \tau   \left(F\left(\bar{x}^k\right)-F\left(x^{\star}\right)\right) \\&
 +\gamma^2    \|\sum_{t=0}^{\tau -1} \overline{G}(\mathrm{\Phi}_k^t)\|^2
 \\& \leq \bigg(1-\frac{\mu \tau \gamma}{2}  + (\frac{2 \gamma L}{N} + \frac{9 \gamma^2 \tau  L^2}{N} ) 16 \tau^3\gamma^2 N  L^2 
 \\&+  (\frac{2 \gamma L}{N} + \frac{9 \gamma^2 \tau  L^2}{N} )64 \tau^2 \gamma^2  L^2 s_2 L^2 
 \\&+  \gamma^2  \frac{12 \tau  L^2}{N} s_2 L^2 + 3\gamma^2 \tau^2L^2   \bigg) \mathrm{E} [\|\bar{x}^k-x^{\star}\|^2]
 \\& + \bigg( (\frac{2 \gamma L}{N} + \frac{9 \gamma^2 \tau  L^2}{N} )( \frac{72\beta \tau^2}{\lambda_l\rho}  + 144\tau^3\beta^2 ) 
 \\& + (\frac{2 \gamma L}{N} + \frac{9 \gamma^2 \tau  L^2}{N} )64\tau^2 \gamma^2  L^2 (s_0+s_1)
 \\& + \gamma^2  \frac{12 \tau  L^2}{N} (s_0+ s_1)\bigg) \mathbb{E} [\| \widehat{\mathbf{d}}_k \|^2]
 \\& = w_0\mathrm{E} [\|\bar{x}^k-x^{\star}\|^2] + w_1  \mathbb{E} [\| \widehat{\mathbf{d}}_k \|^2].
\end{aligned}
\end{equation}

\subsection{Proof of Theorem~\ref{th:convergence}}
Let Assumptions~\ref{as:local-costs},  \ref{as:graph}, \ref{as:compressor} and \ref{as:independent compressor} hold, $\gamma$ satisfies \eqref{eq:gamma_phi}, \eqref{eq.gamma_saga_2_0} and \eqref{eq.gamma_saga_2_1}, $r^2\lambda_u\rho \tau \beta =1$. Based on the above relations, we can derive that 
\begin{equation}
    \mathbf{T}_{k+1} \leq \mathbf{\Xi} \mathbf{T}_k,
\end{equation}
where 
$\mathbf{T}_k = [
    \mathbb{E}[\|\bar{x}^{k+1}-x^{\star}\|^2]; \mathbb{E}[\|\widehat{\mathbf{d}}_{k+1}\|^2];\mathbb{E}[\|  \mathbf{X}_{k+1} - \mathbf{U}_{k+1}  \|^2   ]; \mathbb{E}[\| (\mathbf{I}+ \mathbf{P})  (\mathbf{Z}_{k+1} - \mathbf{S}_{k+1}  )\|^2]
]$.
The diagonal elements of $\mathbf{\Xi}$ are $\mathbf{\Xi}_{11}=w_0$, $\mathbf{\Xi}_{22}=\delta + \frac{ q_0\| \widehat{\mathbf{V}}^{-1} \|^2}{1-\delta}$, $\mathbf{\Xi}_{33} = 1 - \eta + \eta^2(p - 1)$, $\mathbf{\Xi}_{44} = \frac{5(p-1)}{2}$, 
the other elements of the nonnegative matrix  $\mathbf{\Xi}$ can be obtained by the inequalities \eqref{phi_vr}, \eqref{t_vr}, \eqref{eq:d_k}, \eqref{eq:compress_z},  \eqref{eq:compress_x} and  \eqref{eq:x_star}.
When the spectral radius of $\mathbf{\Xi}$ verifies $\text{sr}(\mathbf{\Xi}) < 1$, then $\mathbf{X}_k$ generated by \alg converges linearly to the optimal solution $\mathbf{X}^*$.
This condition can be verified by a suitable choice of parameters $\beta, \tau, r, \rho, \gamma, \eta, p$.

% to ensure that $\text{sr}(\mathbf{\Xi}) < 1$.
% %
% Besides, note that $\text{sr}(\mathbf{\Xi} )\leq \left \| \mathbf{\Xi} \right \|_1 =  \operatorname{max} \{w_0+\mathbf{\Xi}_{21}+\mathbf{\Xi}_{31}+\mathbf{\Xi}_{41}, w_1+ \mathbf{\Xi}_{22}+\mathbf{\Xi}_{32}+\mathbf{\Xi}_{42}, \mathbf{\Xi}_{23}+\mathbf{\Xi}_{33}+\mathbf{\Xi}_{43}, \mathbf{\Xi}_{24}+\mathbf{\Xi}_{44}\}$. Thus, linear convergence can be guaranteed by imposing $\| \mathbf{\Xi} \|_1 < 1$ together with the previous conditions on $\gamma$, which can be satisfied by sufficiently small $\gamma$, small variance $p, p$, and proper choice of $\beta, \tau, r, \rho, \eta$.

\bibliographystyle{IEEEtran}
\bibliography{references}
\end{document}